\numberwithin{equation}{section}
\theoremstyle{definition}
\newtheorem*{defn*}{\protect\definitionname}
\theoremstyle{plain}
\newtheorem{thm}{\protect\theoremname}[section]
\theoremstyle{plain}
\newtheorem{lem}[thm]{\protect\lemmaname}
\theoremstyle{plain}
\newtheorem*{thm*}{\protect\theoremname}
\theoremstyle{plain}
\newtheorem*{lem*}{\protect\lemmaname}
\theoremstyle{definition}
\newtheorem{defn}[thm]{\protect\definitionname}
\theoremstyle{plain}
\newtheorem{prop}[thm]{\protect\propositionname}
\theoremstyle{plain}
\newtheorem{cor}[thm]{\protect\corollaryname}
\theoremstyle{plain}
\newtheorem{assumption}[thm]{\protect\assumptionname}
\theoremstyle{definition}
\newtheorem{rem}[thm]{Remark}
\author[1,2,3]{Hayk Asatryan\thanks{asatryan@math.uni-wuppertal.de}}
\author[1,2,3]{Hanno Gottschalk\thanks{hanno.gottschalk@uni-wuppertal.de}}
\author[1]{Marieke Lippert\thanks{marieke-lippert@gmx.de}}
\author[1,2,3]{Matthias Rottmann\thanks{rottmann@math.uni-wuppertal.de}}
\affil[1]{\small School of Mathematics and Natural Sciences, University of Wuppertal, Germany}
\affil[2]{\small Institute of Mathematical Modelling, Analysis and Computational Mathematics (IMACM)}
\affil[3]{\small Interdisciplinary Center for Machine Learning and Data Analytics (IZMD)}
\providecommand{\assumptionname}{Assumption}
\providecommand{\corollaryname}{Corollary}
\providecommand{\definitionname}{Definition}
\providecommand{\lemmaname}{Lemma}
\providecommand{\propositionname}{Proposition}
\providecommand{\theoremname}{Theorem}
\tikzstyle{arrow} = [thick,->,>=stealth]
\definecolor{lgray1}{rgb}{0.83, 0.83, 0.83}
\definecolor{lgray2}{rgb}{0.9, 0.9, 0.9}
\definecolor{navy}{rgb}{0.0, 0.0, 0.5}
\definecolor{bblue}{rgb}{0.74, 0.83, 0.9} 
\definecolor{egreen}{rgb}{0.0, 0.5, 0.0}
\definecolor{bred}{rgb}{0.8, 0.0, 0.0}
\begin{document}
\title{\vspace{-2ex}
A Convenient Infinite Dimensional Framework\linebreak{}
for Generative Adversarial Learning}

\maketitle
\begin{longtable}[c]{p{0.9\textwidth}}
\textbf{\small{}Abstract. }{\small{}In recent years, generative adversarial networks (GANs) have
demonstrated impressive experimental results while there are only
a few works that foster statistical learning theory for GANs. In this
work, we propose an infinite dimensional theoretical framework for
generative adversarial learning. We assume that the probability density functions of the underlying measure are uniformly bounded,
$k$-times $\alpha$-H\"{o}lder differentiable ($C^{k,\alpha}$) and
uniformly bounded away from zero. Under these assumptions, we show that the Rosenblatt transformation
induces an optimal generator, which is realizable in the hypothesis
space of $C^{k,\alpha}$-generators. With a consistent definition
of the hypothesis space of discriminators, we further show that the
Jensen-Shannon divergence between the distribution induced by
the generator from the adversarial learning procedure and the data
generating distribution converges to zero. Under certain regularity
assumptions on the density of the data generating process, we also
provide rates of convergence based on chaining and concentration.}{\small\par}

\smallskip{}

\textbf{\small{}Keywords:}{\small{} Generative Adversarial Learning
$\bullet$ Inverse Rosenblatt Transformation $\bullet$ Statistical
Learning Theory $\bullet$ Chaining $\bullet$ Covering Numbers for
Hölder Spaces}{\small\par}

\smallskip{}

\textbf{\small{}MSC 2020:}{\small{} 62G20, 68T05}\tabularnewline
\end{longtable}

\smallskip{}

\section{Introduction}

\addcontentsline{toc}{section}{Introduction}

Generative learning aims at modeling the distribution of a data generating
process and sampling from it. This desire is at least as old as Markov
Chain Monte Carlo (MCMC) methods \cite{hastings70} and has resulted in
several different types of models and methods such as, to name only
a few of them, hidden Markov models (HMM), Gaussian mixture models
(GMM), Boltzmann machines (BM), principal component analysis (PCA),
(variational) autoencoders (AE) and generative adversarial networks (GAN), see \cite{Ghahramani2004,HintonBoltzmann,Goodfellow14,kingma2019introduction}.
Most of these models either aim at dimensionality reduction (PCA,
AE) but do not allow for sampling, or allow for sampling but suffer
from the curse of dimensionality (GMM, BM) or allow sampling in high
dimension but require an explicit density that has to be constructed
or learned beforehand (MCMC). However, this changed with the advent
of GANs \cite{Goodfellow14} that can be classified under the more
general notion of adversarial learning. Eversince, GANs have demonstrated
remarkable capabilities in the domain of generative modeling, in particular,
with application to image synthesis \cite{radford2016unsupervised,ledig2017photo,brock2018large,Zhu.2018}.

In the typical adversarial learning framework, a generator is supposed
to map examples from a noise distribution, say the uniform distribution
over a compact set, to examples of a desired distribution, a sample
of real / original data of which is available. Mathematically this
corresponds to a pushforward or image measure of the noise measure
under the generator map. This map is learned in a two player setting:
A discriminator is trained to distinguish between the real examples
from a data set and the fake ones, i.e., those obtained by the generator.
On the other hand, the generator is trained to fool the discriminator.
Learning both the discriminator and the generator alternatingly /
simultaneously leads to a generator producing ever more realistic
examples that approximately follow the distribution the data stems
from.

GANs utilize deep neural networks (DNNs) as generators and discriminators.
They allow for sampling from a model of the data generating distribution
without the necessity of explicitly modeling densities (such as MCMC,
GMM), without critical slowing down of learning (MCMC, BM) and barely
suffering from the curse of dimensionality.

Many efforts have been made to empirically improve the GAN framework for image synthesis \cite{radford2016unsupervised,ledig2017photo,brock2018large}. A remarkable improvement was obtained by the so-called
CycleGAN framework \cite{Zhu.2018}. Therein, not only the generator
but also its inverse is learned (with a corresponding second discriminator).
Impressive empirical results are presented, e.g., for transforming
horses into zebras and vice versa. Therefore, GANs have also demonstrated
to cope with higher resolution image data.

While the loss functions used in many works aim at reducing the Jensen-Shannon
(JS) divergence between the true distribution and the class of generated
(or parameterized) ones, other loss functions aiming at reducing other
distance metrics have also been proposed, e.g., Wasserstein-GANs \cite{pmlr-v70-arjovsky17a}
that aim at reducing the Wasserstein distance. Wasserstein-GANs may
in some cases yield better empirical convergence properties, where
ordinary GANs suffer from so-called mode collapsing \cite{lala2018evaluation}.

In contrast to the rapidly progressing developments on the empirical
side, theoretical results on generative adversarial learning\footnote{Here we use the term generative adversarial learning in contrast to
generative adversarial networks if we do not refer to neural networks
as models, specifically.} remained unexamined until the publication \cite{biau2020}. That
work studies the connection between the adversarial principle of generative
adversarial learning and the Jensen-Shannon divergence in a framework
with a finite dimensional hypothesis space. This includes existence
and uniqueness arguments for the optimal generator. While the discriminator
is modeled in a rather abstract way, the authors also provide approximation
arguments. In particular, they provide large sample theory for generative
adversarial learning as well as a central limit theorem. However this
is restricted to finite dimensional parametric models.

Adversarial learning has also been applied to 
density estimation
in the nonparametric context \cite{singh2018,uppal2020nonparametric}.
Universal approximation theorems for deep neural networks \cite{yarotsky2017error}
are used to approximate densities. This is however very different
from \cite{Goodfellow14} and also from this work, as no generator
for sampling from noise input is learned. In \cite{singh2018} generators
are considered, however the existence of such generators in certain
Besov spaces is assumed (cf. Assumption A4 in Theorem 9), whereas
in this article we give proofs and an explicit construction for H\"{o}lder
spaces.

Further theoretical work on generative adversarial learning that was
recently published includes theory on Wasserstein methods \cite{biau2020wasserstein},
as well as theory on domain shifts quantified by means of an adversarial
loss that reduces Jensen-Shannon divergence \cite{shui2020mathcalhdivergence}.

In this work we extend the approach of \cite{biau2020} to an infinite
dimensional setting where the generators are $k$-times differentiable
$\alpha$-H\"{o}lder ($C^{k,\alpha}$-) functions defined on $[0,1]^{d}$
(e.g., the input space of images with color intensities in $[0,1]$
for image generation tasks). The discriminator space is chosen consistently,
such that optimal discriminators always exist. This enables us to
prove the existence of an optimal generator under quite general assumptions
on the probability density of the `true' data generating process on
Borel sets of $[0,1]^{d}$. This convenient, but not very restrictive
realizability property of our framework avoids the use of approximations
as in \cite{biau2020}. We achieve this proving the Rosenblatt transformation
\cite{ros52} to be in the hypothesis space for the generators consisting
of bounded, invertible $C^{k,\alpha}$-functions. In this way, we
can estimate the error, measured in terms of the Jensen-Shannon divergence,
between the probability distribution from the generative adversarial
learning process and the `true' distribution by the sampling error,
i.e., the supremum over the empirical process over the product of
the hypothesis spaces of generators and discriminators \cite{shalev2014understanding}.

Statistical learning theory to a large extent depends on compactness
properties of the hypothesis space. In infinite dimensional statistics,
where the hypotheses are parameterized by functions in bounded regions
of some -- say Banach -- function space \cite{Vaart96,gine2016mathematical,Handel16},
the use of the Banach topology is prohibited by Riesz's theorem which
characterizes locally compact Banach spaces (see, e.g., \cite{Dunf88vol1,rud91functional}).
Therefore, compact embeddings into spaces with weaker topology play
a crucial role in infinite dimensional statistics. In our convenient
framework, we obtain such embeddings from the embedding of bounded
$C^{k,\alpha}$-functions into $C^{k,\alpha'}$-functions for $0<\alpha'<\alpha<1$
(see \cite{Gil01}). With the aid of the uniform law of large numbers
over compact spaces \cite{Ferguson.2017}, we can thus conclude that
in this setting of adversarial learning, generators rendering the
true distribution in the large sample limit can always be learned.

Under stronger regularity assumptions, assuming sufficiently high H\"{o}lder regularity of the density $k+\alpha$, this statement can also be
made quantitative in order to prove explicit rates of convergence $\sim n^{-\frac{1}{2}}$ up to polylogarithms. Under the same assumptions as ours, \cite{chen2020distribution} derived a rate $n^{-\frac{k+\alpha}{2(k+\alpha)+d}}$ (up to polylogarithms) of convergence for a Wasserstein-like $\mathcal{F}$-divergence based on H\"{o}lder functions, which, in the limit $k\to\infty$, matches our rates. However, this $\mathcal{F}$-divergence is weaker  than the Jensen-Shannon divergence employed here. On the other hand,  \cite{chen2020distribution} invoke neural networks with architectures adaptive to the sample size, whereas here we restrict ourselves to an infinite dimensional statistics setting. In a recent preprint, \cite{belomnestry2021} suggest rates $\sim n^{-1}$ (up to polylogarithms) in the limit of arbitrarily high regularity.

The key observations are that (a) for the embedding of bounded $C^{k,\alpha}$-functions
into $L^{\infty}\left([0,1]^{d}\right)$ covering numbers are explicitly
known and they allow a convergent metric entropy integral in Dudley's
inequality, provided the regularity defined by $k+\alpha$ being sufficiently
high, and (b) the fact that the empirical process defined by the empirical
loss function is a subgaussian process with respect to the $\|\cdot\|_{\infty}$-norm.
Thus, explicit estimates for the supremum of the empirical process
/ the sampling error can be obtained via chaining and concentration
(see \cite{Vaart96,McDiar89}). These rates, in contrast to \cite{biau2020},
do not depend on the dimension of the hypothesis space (which is infinite
in our setting). We also give an argument how to adaptively extend
the hypothesis space with the sample size in order to eliminate certain
assumptions on the regularity of the probability density of the data
generating process and achieve almost sure convergence of the generative
adversarial learner in Jensen-Shannon distance, also for this case.

While this article is motivated by generative adversarial learning, our results on the existence of generators might be of independent interest in the fields of copula theory \cite{joe2014dependence}, normalizing flows \cite{dinh2014nice,kobyzev2020normalizing} or optimal transport \cite{brenier1991polar,villani2009optimal}.
\medskip{}

\noindent \textbf{Outline. }This paper is organized as follows: In
Section~\ref{sec:outl_glearn} we introduce the framework of adversarial learning \cite{Goodfellow14} and review the connection of its loss function to the Jensen-Shannon divergence. We then make use of the latter to state a decomposition of the estimation error.
Thereafter in Section~\ref{Sec:Gener_and_Discr}, first state our assumptions on the target measure, before we introduce the hypothesis
class of $k$-times $\alpha$-H\"{o}lder differentiable generators
with the properties outlined above and prove that if the `true' data
generating process has a nonnegative $C^{k,\alpha}$-density function,
the optimal generator given by the Rosenblatt transformation is contained
in the hypothesis space. Furthermore, we introduce the consistent
class of discriminators that also contains the optimal one. Furthermore, we prove the uniform convergence of
the empirical loss to its expected value, which implies that the sampling
error vanishes in the limit. We also prove the convergence of the
probability distribution, generated by the empirical risk minimizer
in the min-max problem from the adversarial learning setting, to the
data generating distribution in Jensen-Shannon divergence. Under stronger
regularity assumptions on the `true' density, in Section \ref{sec:quantitative}
we give explicit rate estimates based on covering numbers for the embedding 
of the hypothesis space into $L^{\infty}\left([0,1]^{d}\right)$,
which is used to eliminate certain regularity assumptions. We give
a summary and outline future research directions in the final Section
\ref{sec:CO}. Technical results on the analysis of H\"{o}lder spaces can be found in Appendix \ref{app:A}, whereas Appendix \ref{app:B} collects some inequalities from high dimensional probability theory for the convenience of the reader.

\section{An Outline of Generative Adversarial Learning}
\label{sec:outl_glearn}

In this section, we give an overview into the general framework of generative adversarial learning following \cite{Goodfellow14}. Generative learning aims learning a process that samples from an unknown original measure $\mu$ from data $Y_i$ sampled from this measure ($Y_i \sim \mu$). In practice, generative learning is oftentimes utilized for sake of generating realistic images $y \in [0,1]^{d}$, $d \in \mathbb{N}$. In that context, let $\mu$ be a given probability measure on the measurable space $\left([0,1]^{d},\mathscr{B}[0,1]^{d}\right)$ with $\mathscr{B}[0,1]^{d}$ being the Borel-$\sigma$-algebra, and let $\lambda$ denote the restriction of the $d$-dimensional Lebesgue measure to $\left([0,1]^{d},\mathscr{B}[0,1]^{d}\right)$. 

A strategy that is oftentimes applied in generative learning is to learn a so-called \emph{generator}, namely a measurable function $\varphi:[0,1]^{d}\to[0,1]^{d}$ from a \emph{hypothesis space} of candidate functions $\mathscr{H}^G$, such that the distribution $\mu_\varphi=\varphi_*\lambda$ of $\varphi(Z)$ approximates the original measure $\mu$ and $Z$ is easy to sample. Here, $\varphi_*\lambda(A)=\lambda(\varphi^{-1}(A))$, $A\in\mathscr{B}[0,1]^d$, stands for the pushforward or image measure of $\lambda$ under $\varphi$. Ideally, $Z$ is a noise random variable $Z \sim \lambda$ with $\lambda$ being the Lebesgue measure. Simulation of $Z$ allows then for generating data $\varphi(Z)$ from the approximate distribution $\mu_{\varphi}$ that resembles the original data $Y$ drawn from $\mu$.

\begin{figure}[t]
\centering
\scalebox{0.7}{
\begin{tikzpicture}
\node[draw,ultra thick, rectangle,fill=lgray2, align=center, minimum width=1cm, minimum height=1cm, rounded corners] (z) at (0,0)  {\small random\\ \small variables $Z_i$};
\node[draw=navy, ultra thick, rectangle, fill=bblue, fill opacity=0.6, text opacity=1, align=center, minimum width=2.5cm, minimum height=2cm, rounded corners] (g) at (3,0) {\small Generator $\varphi$};
\draw[->, thick](z)--(g);
\node[draw, ultra thick, rectangle,fill=lgray2, align=center, minimum width=1.5cm, minimum height=1.5cm, rounded corners] (fs) at (6,0)  {\small generated\\ \small sample\\ \small $\varphi(Z_i)$};
\draw[->, thick](g)--(fs);


\node[draw,ultra thick, rectangle,fill=lgray2, align=center, minimum width=1.5cm, minimum height=1.5cm, rounded corners] (rs) at (6,2.5)  {\small original\\ \small sample\\ \small $Y_i$};
\node[draw=navy, ultra thick, rectangle, fill=bblue, fill opacity=0.5, text opacity=1, align=center, minimum width=2.5cm, minimum height=2cm, rounded corners] (d) at (9,1.2) {\small Discriminator $\xi$};
\coordinate (dp) at ([xshift=-0.15cm]d.west);
\draw[->, thick](fs) .. controls (7, 0.25) and (7.2,0.5) .. (dp)-- (d);
\draw[->, thick](rs) .. controls (7, 2.25) and (7.2,2.25) .. (dp)-- (d);
\node[draw, ultra thick, rectangle, fill=lgray2, align=center, minimum width=1.6cm, minimum height=2cm, rounded corners] (o) at (12,1.2) {};
\node[draw=lgray1, circle, fill=egreen, align=center, label={[label distance=0.01cm, align=center]90:\small \textcolor{egreen}{original}}] at(12, 1.4){};
\node[draw=lgray1, circle, fill=bred, align=center, label={[label distance=0.01cm, align=center]270:\small \textcolor{bred}{generated}}] at(12, 0.9){};
\draw[->, thick](d)--(o);
\node[draw,ultra thick, rectangle,fill=lgray2, align=center, minimum width=1cm, minimum height=0.5cm, rounded corners] (l) at (14.5,1.2)  {\small objective $\hat{L}$};
\draw[->, thick](o)--(l);
\end{tikzpicture}}
\caption{\small\itshape Illustration of the adversarial learning framework studied in the present manuscript. The discriminator aims at tagging samples from $Y_i$ as original and generated samples from $\varphi(Z_i)$ as generated, based on the estimated probabilities $\xi(\varphi(Z_i))$ and $\xi(Y_i)$. Blue components of the figure are the learnable components subject to optimization.}
\label{fig:gan}
\end{figure}

Adversarial learning compares original data obtained by $Y$ with generated data obtained by $\varphi(Z)$ and the learning process (performed by stochastic optimization) modifies $\varphi$ such that examples of $\varphi(Z)$ become indistinguishable from generated examples obtained by $\varphi(Z)$. 
This is achieved with the help of a so-called \emph{discriminator}, a function $\xi: [0,1]^{d} \to (0,1)$  which is supposed to estimate the probability of its input stemming from the original measure $\mu$ vs.\ its input stemming from $\mu_\varphi$. As a suitable discriminator is not known \emph{a priori}, it has to be learned from the data by maximizing the log-likelihood 
\begin{equation}
\label{eq:likelihoodDiscriminator}
\hat{L}\left(\varphi,\xi,n\right)=\dfrac{1}{2n}\sum_{i=1}^{n}\log\xi\left(Y_{i}\right)+\dfrac{1}{2n}\sum_{i=1}^{n}\log\left[1-\xi\left(\varphi\left(Z_{i}\right)\right)\right] 
\end{equation}
 over a hypothesis space of discriminators $\mathscr{H}^D$. Here and in the following $Y_i$ are i.i.d.\ copies of $Y$ and $Z_i$ i.i.d.\ copies of $Z$. The generative adversarial learning problem (see also Fig. \ref{fig:gan}) then amounts to the solution of the minimax problem 
\begin{equation} \label{eq:al_Lhat}
\min\limits _{\varphi\in\mathscr{H}^{G}}\max\limits _{\xi\in\mathscr{H}^{D}} \hat{L} \left(\varphi,\xi\right).
\end{equation}
The advantage of generative adversarial learning lies in the fact that \eqref{eq:likelihoodDiscriminator} avoids the representation of $\mu_\varphi$ by its density, which is hard to compute, unless one employs restrictive neural network architectures \cite{teshima2020coupling}.

Despite generative adversarial learning is intuitive, a deeper theoretical understanding is desirable to explain why it actually works. To this aim, we have to define what it means to have only a small difference between $\mu_\varphi$ and $\mu$.   
To measure the dissimilarity between two probability distributions,
we will use the Kullback-Leibler and the Jensen-Shannon divergences
(see \cite{Kul97}). These divergences will
be helpful to evaluate a learned measure imitating the given one.

Let $\nu$ and $\mu$ be two equivalent (i.e., absolutely continuous
with respect to one another) probability measures on $[0,1]^{d}$. Recall that the \emph{Kullback-Leibler
divergence}  is defined by the formula
$d_{KL}\left(\nu||\mu\right)\coloneqq\int\log\tfrac{d\nu}{d\mu}d\nu$,
provided that the integral on the right hand side exists (here $\tfrac{d\nu}{d\mu}$
is the Radon-Nikodym derivative). Jensen's integral inequality shows
that the integral $\int\log\tfrac{d\nu}{d\mu}d\nu$ exists in the extended sense and it is nonnegative. Moreover, $d_{KL}\left(\nu||\mu\right)$
vanishes iff $\nu=\mu$. Here and in the following the integration is over $[0,1]^d$ if not specified.

Since the Kullback-Leibler divergence is not symmetric in general,
we will use another measure of dissimilarity between two probability
distributions, called the Jensen-Shannon divergence.
\begin{defn}
The \emph{Jensen-Shannon
divergence} is defined by the formula
\[
d_{JS}\left(\nu,\mu\right)\coloneqq\tfrac{1}{2}\left[d_{KL}\left(\mu\left\Vert \tfrac{\mu+\nu}{2}\right.\right)+d_{KL}\left(\nu\left\Vert \tfrac{\mu+\nu}{2}\right.\right)\right].
\]
\end{defn}

In the following we relate \eqref{eq:likelihoodDiscriminator} and the Jensen-Shannon divergence. 
By the law of large numbers, under suitable conditions \eqref{eq:likelihoodDiscriminator} almost surely converges to  
\begin{equation}
\label{eq:theor_loss}
L\left(\varphi,\xi\right)=\frac{1}{2}\left(\mathbb{E}_{Y\sim\mu}\left[\log\left(\xi\left(Y\right)\right)\right]+\mathbb{E}_{Z \sim \lambda}\left[\log\left(1-\xi\left(\varphi(Z)\right)\right)\right]\right) .    
\end{equation}
Thus, the minimax optimization problem \eqref{eq:al_Lhat} can be seen as an approximation to the minimax problem for $L\left(\varphi,\xi\right)$, namely
\begin{equation} \label{eq:al_L}
\min\limits _{\varphi\in\mathscr{H}^{G}}\max\limits _{\xi\in\mathscr{H}^{D}} L \left(\varphi,\xi\right) .
\end{equation}
The following proposition relates the solution of \eqref{eq:al_L} to the search of $\mu_\varphi$ which is closest to $\mu$ in terms of the Jensen-Shannon divergence (cf. \cite{Goodfellow14}).

\begin{prop} \label{prop:Dis_JS}
Suppose $\mu$ is absolutely continuous with respect to the Lebesgue measure $\lambda$ and its density $f_\mu$ is positive on $[0,1]^d$. Additionally, we assume that $\mu_\varphi$ for $\varphi\in \mathscr{H}^G$ also fulfills the same condition with associated density $f_\varphi>0$. Finally, we assume that for $\varphi\in\mathscr{H}^G$, $\xi_\varphi=\frac{f_\mu}{f_\mu+f_\varphi}\in \mathscr{H}^D$ holds. 

Then, $\xi_\varphi$ is the unique solution (up to redefinition on  null sets of $\lambda$) to
\[
\max\limits_{\xi\in\mathscr{H}^D}L(\varphi,\xi)=d_{JS}(\mu,\mu_\varphi)-\log(2).
\]
\end{prop}
\begin{proof}
By setting
\[
h\left(s,r,p\right)\coloneqq\dfrac{1}{2}\left[s\log p+r\log\left(1-p\right)\right]\quad\left(s,r>0;0<p<1\right),
\]
we rewrite \eqref{eq:theor_loss} in the form
\[
L\left(\varphi,\xi\right)=\int h\left(f_{\mu},f_{\varphi},\xi\right)d\lambda \quad\left(\varphi\in\mathscr{H}^{G},\xi\in\mathscr{H}^{D}\right).
\]
It is easy to check that for fixed $s$ and $r$ the function $h\left(s,r,\cdot\right)$
attains its strict global maximum at $p=\frac{s}{s+r}$. Hence, in
view of the assumption $\xi_{\varphi}\in\mathscr{H}^{D}$, we have 
\[
h\left(f_{\mu},f_{\varphi},\xi\right)\leq h\left(f_{\mu},f_{\varphi},\xi_{\varphi}\right)\quad\left(\varphi\in\mathscr{H}^{G},\xi\in\mathscr{H}^{D}\right).
\]
which shows that $\xi_{\varphi}=\tfrac{f_{\mu}}{f_{\mu}+f_{\varphi}}$
maximizes $L\left(\varphi,\cdot\right)$. Moreover, if  $E\coloneqq\left\{ x:\xi(x)\ne\xi_{\varphi}(x)\right\} $ is
a set of positive Lebesgue measure, it follows that
\[
\left.h\left(f_{\mu},f_{\varphi},\xi\right)\right|_{E}<\left.h\left(f_{\mu},f_{\varphi},\xi_{\varphi}\right)\right|_{E};
\]
and therefore $L\left(\varphi,\xi\right)<L\left(\varphi,\xi_{\varphi}\right)$, which shows that $\xi$ is no maximizer.

Finally, writing the Radon-Nikodym derivative as a quotient of densities, we obtain 
\begin{align*}
L\left(\varphi,\xi_{\varphi}\right)&=\frac{1}{2}\left[\int \log\left(\frac{f_\mu}{f_\mu+f_\varphi}\right) f_\mu\, d\lambda+\int \log\left(\frac{f_\varphi}{f_\mu+f_\varphi}\right) f_\varphi\, d\lambda\right]\\
&=\frac{1}{2}\left[\int \log\left(\frac{2f_\mu}{f_\mu+f_\varphi}\right) f_\mu\, d\lambda+\int \log\left(\frac{2f_\mu}{f_\mu+f_\varphi}\right) f_\varphi\, d\lambda\right]-\log(2)\\
&=d_{JS}(\mu,\mu_\varphi)-\log(2).\qedhere
\end{align*}
\end{proof}

In the next step, we compare the solution of the accessible minimax problem \eqref{eq:al_Lhat} to the actual minimization of the Jensen-Shannon divergence by solution of \eqref{eq:al_L}. This leads to a decomposition of the estimation error made in generative adversarial learning. A similar estimate can be found in \cite{biau2020}. 

\begin{thm}\label{thm:Decomposition}
Suppose that the conditions of Proposition \ref{prop:Dis_JS} hold and that $\left(\hat\varphi,\hat\xi\right)$ solve \eqref{eq:al_Lhat}. Then,
\[
d_{JS}(\mu,\mu_{\hat\varphi})\leq \varepsilon_{\textup{model}}+2\varepsilon_{\textup{sample}}(n),
\] 
where $\varepsilon_{\textup{model}}\coloneqq\inf_{\varphi\in\mathscr{H}^G} d_{JS}(\mu,\mu_\varphi)$ is the model error and 
\[
\varepsilon_{\textup{sample}}(n)\coloneqq\sup_{\varphi\in\mathscr{H}^G\atop \xi\in\mathscr{H}^D}\left|L(\varphi,\xi)-\hat L(\varphi,\xi)\right|
\]
is the sampling error at the sample size $n$.
\end{thm}
\begin{proof}
We first note that
\[
L\left(\varphi^{*},\xi_{\varphi^{*}}\right)=\min\limits _{\varphi\in\mathscr{H}_{K,\hat{K}}^{G}}\max\limits _{\xi\in\mathscr{H}_{B,C_{1},C_{2}}^{D}}L\left(\varphi,\xi\right)\leq\max\limits _{\xi\in\mathscr{H}_{B,C_{1},C_{2}}^{D}}L\left(\hat{\varphi}_{n},\xi\right)=L\left(\hat{\varphi}_{n},\xi_{\hat{\varphi}_{n}}\right),
\]
where the last equality follows from Proposition \ref{prop:Dis_JS}.
Next, in view of the definition of the sampling error, we have
\begin{align*}
L\left(\hat{\varphi}_{n},\xi_{\hat{\varphi}_{n}}\right) & \leq\hat{L}\left(\hat{\varphi}_{n},\xi_{\hat{\varphi}_{n}},n\right)+\varepsilon_{\textup{sample}}\left(n\right)\\
 & =\max\limits _{\xi\in\mathscr{H}^{D}}\hat{L}\left(\hat{\varphi}_{n},\xi,n\right)+\varepsilon_{\textup{sample}}\left(n\right)\\
 & =\min\limits _{\varphi\in\mathscr{H}^{G}}\max\limits _{\xi\in\mathscr{H}^{D}}\hat{L}\left(\varphi,\xi,n\right)+\varepsilon_{\textup{sample}}\left(n\right)\\
 & \leq\max\limits _{\xi\in\mathscr{H}^{D}}\hat{L}\left(\varphi^{*},\xi,n\right)+\varepsilon_{\textup{sample}}\left(n\right)\\
 & \leq\max\limits _{\xi\in\mathscr{H}^{D}}\left[L\left(\varphi^{*},\xi\right)+\varepsilon_{\textup{sample}}\left(n\right)\right]+\varepsilon_{\textup{sample}}\left(n\right)\\
 & =L\left(\varphi^{*},\xi_{\varphi^{*}}\right)+2\varepsilon_{\textup{sample}}\left(n\right),
\end{align*}
where $\varphi^{*}\in \mathscr{H}^G$ is arbitrary. Using the representation of the Jensen-Shannon divergence given in Proposition \ref{prop:Dis_JS}, we can subtract $\log(2)$ on both sides of the above inequality and take the infimum over $\varphi^{*}\in\mathscr{H}^G$ to conclude.
\end{proof}

Let us remark that due to the rather strong assumptions   on $\mathscr{H}^D$ in Proposition \ref{prop:Dis_JS}, we do not get model errors for discriminators.

Let us also remark that \cite{belomnestry2021} suggests a different estimate of the error, which avoids taking the supremum over $\mathscr{H}^G$ and $\mathscr{H}^D$ in the definition of $\varepsilon_{\textup{sample}}(n)$.   

In the following section, we give our assumptions on the density $f_\mu$ of the measure $\mu$ in terms of H\"older regularity and choose $\mathscr{H}^G$  such that the model error $\varepsilon_{\textup{model}}$ can be reduced to zero. That is, we show that in the hypothesis space there exist generators $\varphi$ that fulfill $\mu_\varphi=\mu$ exactly. We furthermore show that one can choose $\mathscr{H}^D$ such that the assumptions of Proposition \ref{prop:Dis_JS} and \ref{thm:Decomposition} are fulfilled. Then in Section \ref{sec:quantitative}, we consider the convergence $\varepsilon_{\textup{sample}}$ quantitatively.

\section{Generators, Discriminators and Consistency}
\label{Sec:Gener_and_Discr}
In this section, we first state our assumptions on the regularity of the density $f_\mu$ in terms of H\"{o}lder differentiability and we define suitable hypothesis spaces for generators and discriminators. Several technical results on H\"{o}lder functions are collected in Appendix \ref{app:A}, where we also prove an inverse function theorem for H\"{o}lder spaces as an important auxiliary result.

\subsection{Assumptions}
For a nonnegative
integer $k$ (or $k=\infty$), $C^{k}\left((0,1)^d,\mathbb{R}^{d_{2}}\right)$
 stands for the set of all $\mathbb{R}^{d_{2}}$-valued functions
with continuous $k$-th order derivatives and $C^{k}\left([0,1]^d,\mathbb{R}^{d_{2}}\right)$
is the set of all $\mathbb{R}^{d_{2}}$-valued functions
whose $k$-th derivatives have continuous extensions to $[0,1]^d$
(or, equivalently, the $k$-th derivatives are uniformly continuous
on $(0,1)^d$).  $C^{k}\left([0,1]^d,\mathbb{R}^{d_{2}}\right)$
is a Banach space with respect to the norm 
\[
\left\Vert f\right\Vert _{C^{k}\left([0,1]^d,\mathbb{R}^{d_{2}}\right)}\coloneqq\max_{\left|\beta\right|\leq k}\sup_{x\in [0,1]^d}\left|D_{\beta}f(x)\right|;
\]
here $\beta=(\beta_{1},\ldots,\beta_{d_{1}})\in\mathbb{Z}^{d_{1}}$
is a multi-index, $|\beta|\coloneqq\sum_{i=1}^{d_{1}}\beta_{i}$ is
its absolute value and $D_{\beta}f(x)\coloneqq\frac{\partial^{|\beta|}f(x)}{\partial x_{1}^{\beta_{1}}\cdots\partial x_{d_{1}}^{\beta_{d_{1}}}}$.
The Fr\'{e}chet derivative of $f$ at $x$  will be denoted by $Df(x)$.
As usual, the space of continuous functions $C^{0}$ and its norm
will be denoted by $C$ and $\Vert\cdot\Vert_{\infty}$, respectively.

The definition and many of the basic properties of H\"{o}lder spaces
of real-valued functions, given in \cite{Gil01}, extend without
difficulty to vector-valued ones:
\begin{defn}
Let $0<\alpha\leq1$,
and let $k$ be a nonnegative integer. The \emph{H\"{o}lder space}
$C^{k,\alpha}\left([0,1]^d,\mathbb{R}^{d_{2}}\right)$ consists
of all functions $f\in C^{k}\left([0,1]^d,\mathbb{R}^{d_{2}}\right)$
for which the following norm is finite: 
\[
\left\Vert f\right\Vert _{C^{k,\alpha}\left([0,1]^d,\mathbb{R}^{d_{2}}\right)}\coloneqq\left\Vert f\right\Vert _{C^{k}\left([0,1]^d,\mathbb{R}^{d_{2}}\right)}+\max_{\left|\beta\right|=k}\sup_{\substack{x,y\in (0,1)^d\\
x\ne y
}
}\frac{\left|D_{\beta}f(x)-D_{\beta}f(y)\right|}{\left|x-y\right|^{\alpha}}.
\]
\end{defn}

Our major requirement on the original measure is the following
\begin{assumption}
\label{assum01} $\mu$ is absolutely continuous with respect to $\lambda$,
and the Radon-Nikodym derivative $f_{\mu}\coloneqq\frac{d\mu}{d\lambda}$ 
is in $C^{k,\alpha}\left([0,1]^d\right)$ for some $\alpha\in(0,1]$
and $k\geq1$. Furthermore, $f_{\mu}$
satisfies the condition
\[
\kappa\coloneqq\min_{x\in[0,1]^d}f_{\mu}(x)>0.
\]
\end{assumption}

\begin{rem}\label{rem:Reg}
While the above assumptions seem restrictive, for any given $\mu$
one can easily define an approximate problem, such that Assumption
\ref{assum01} is fulfilled. To this avail, $[0,1]^{d}$ is embedded
in $\mathbb{R}^{d}$, convolved with $N(0,\kappa^{2}\mathbbm{1})$
and then reprojected to $[0,1]^{d}$ applying the $\operatorname{mod}\,1$-mapping
in all dimensions. On the side of the input data, this corresponds
to the mapping $Y\mapsto(Y+N)\,\operatorname{mod}\,1$, where $N\sim N(0,\kappa^{2}\mathbbm{1})$
is noise. This does not only allow to construct an explicit lower
bound for the density of the modified measure, but also an explicit
computation of the $C^{k,\alpha}$-norm.  
\end{rem}

\subsection{The hypothesis space of generators}

 Our goal is to define suitable hypothesis spaces for generators such that $\mu=f_\mu\lambda$ is realizable under the Assumption \ref{assum01}. On the other hand, the hypothesis space should not be too large, in order to keep control on $\varepsilon_{\textup{sample}}(n)$, which is the same as avoidance of overfitting. 

\begin{defn}
\label{def:set_of_gener} Let $K$ and $\hat{K}$ be positive constants.
The set of all $k\geq1$ times $\alpha$-H\"{o}lder differentiable
bijective functions $\varphi:[0,1]^{d}\to[0,1]^{d}$, satisfying the conditions
$\left\Vert \varphi\right\Vert _{C^{k,\alpha}}\leq K$ and $\left\Vert \varphi^{-1}\right\Vert _{C^{k,\alpha}}\leq\hat{K}$,
will be denoted by $\mathscr{H}_{K,\hat{K}}^{G}$. 
\end{defn}
The spaces $\mathscr{H}_{K,\hat{K}}^{G}$ are candidates for generator spaces, provided the constants $K$ and $\hat K$ are sufficiently large. Before doing so, we convince ourselves that the requirements of Proposition \ref{prop:Dis_JS} concerning the existence of densities are fulfilled. Also, we have a closer look at the regularity of the densities in terms of their H\"{o}lder norms. 
\begin{prop}
\label{prop:gen_dens}$\mathscr{H}_{K,\hat{K}}^{G}$ is a closed subset
of the space $C^{k,\alpha}\left([0,1]^{d},\mathbb{R}^d\right)$. For each $\varphi\in\mathscr{H}_{K,\hat{K}}^{G}$,
the pushforward measure $\mu_{\varphi}=\varphi_{*}\lambda$
is absolutely continuous with respect to $\lambda$, and its
density $f_{\varphi}$ is a $\left(k-1\right)$-times $\alpha$-H\"{o}lder
differentiable function on $[0,1]^{d}$.
The density $f_{\varphi}$ is given by the formula
\begin{equation}
f_{\varphi}=\left|J_{\varphi^{-1}}\right|\label{eq:gen_density}
\end{equation}
where $J_{\varphi^{-1}}$ is the Jacobian determinant of $\varphi^{-1}$. The following estimate bounds $f_\varphi$ in terms of $K$ and $\hat{K}$ from above and below
\begin{equation}
\frac{1}{d!K^{d}}\leq f_{\varphi}\leq d!\hat{K}^{d}.\label{eq:densit_bounds}
\end{equation}
Moreover,
\[
\left\Vert f_{\varphi}\right\Vert _{C^{k-1,\alpha}}\leq2d!d^{k+1}\hat{K}^{d}.
\]
\end{prop}

\begin{proof}
Using \eqref{eq:det_up_bound}, for every $\varphi\in\mathscr{H}_{K,\hat{K}}^{G}$,
we may estimate
\begin{equation}
\left|J_{\varphi}\right|\leq d!\max_{x\in[0,1]^{d}}\left[\max_{1\leq i,j\leq d}\left|\frac{\partial\varphi_{i}(x)}{\partial x_{j}}\right|\right]^{d}\leq d!K^{d},\label{eq:Jacob_bound_phi}
\end{equation}
and, similarly,
\begin{equation}
\left|J_{\varphi^{-1}}\right|\leq d!\hat{K}^{d}.\label{eq:Jacob_bound_phi_inv}
\end{equation}
The latter, in view of the equality $J_{\varphi}=\frac{1}{J_{\varphi^{-1}}\circ\varphi}$,
gives $\left|J_{\varphi}\right|\geq\tfrac{1}{d!\hat{K}^{d}}$. Therefore
the statement (b) of Theorem \ref{Th: hoeld_diffeomor} shows that
$\mathscr{H}_{K,\hat{K}}^{G}$ is closed in $C^{k,\alpha}\left([0,1]^{d},\mathbb{R}^d\right)$.

The change-of-variables formula shows that
\[
\varphi_{*}\lambda\left(A\right)=\lambda\left(\varphi^{-1}\left(A\right)\right)=\int\limits _{\varphi^{-1}\left(A\right)}d\lambda=\int\limits _{A}\left|J_{\varphi^{-1}}\right|d\lambda,
\]
for any $\varphi\in\mathscr{H}_{K,\hat{K}}^{G}$ and $A\in\mathcal{\mathscr{B}}\left([0,1]^{d}\right)$;
hence each $\varphi\in\mathscr{H}_{K,\hat{K}}^{G}$ generates a measure
$\mu_{\varphi}=\varphi_{*}\lambda$ with a density $f_{\varphi}=\left|J_{\varphi^{-1}}\right|$.
Equations \eqref{eq:Jacob_bound_phi} and \eqref{eq:Jacob_bound_phi_inv}, together
with the equality $J_{\varphi^{-1}}=\frac{1}{J_{\varphi}\circ\varphi^{-1}}$,
imply the estimate \eqref{eq:densit_bounds}.

In view of Corollary \ref{cor:hoeld_norm_determinant},
\[
\left\Vert J_{\varphi^{-1}}\right\Vert _{C^{k-1,\alpha}}\leq2d!d^{k}\max\bigl\{1,[\operatorname{diam}([0,1]^{d})]^{1-\alpha}\bigr\}\left\Vert \varphi^{-1}\right\Vert _{C^{k,\alpha}}^{d}\leq2d!d^{k+1}\hat{K}^{d},
\]
which completes the proof.
\end{proof}

Compactness is an important property for hypothesis spaces as it plays a central role in the uniform law of large numbers (see e.g.\ \cite{Ferguson.2017}) that is used to control $\varepsilon_{\textup{sample}}(n)$ in the limit $n\to\infty$. However, in view of Stone's theorem, we cannot expect compactness for the space $\mathscr{H}_{K,\hat{K}}^{G}$ in the $C^{k,\alpha}$-topology. Therefore it is crucial that the hypothesis space of generators is compact with respect to a topology that is just a little bit weaker: 

\begin{prop}
\label{prop:compact_gener}$\mathscr{H}_{K,\hat{K}}^{G}$ is compact
in $C^{k,\alpha'}\left([0,1]^{d},\mathbb{R}^d\right)$ for $0<\alpha'<\alpha$.
\end{prop}

\begin{proof}
Indeed, $\mathscr{H}_{K,\hat{K}}^{G}$ is a bounded subset of the
space $C^{k,\alpha}\left([0,1]^{d},\mathbb{R}^d\right)$. The embedding of  $C^{k,\alpha}\left([0,1]^{d},\mathbb{R}^d\right)$  into $C^{k,\alpha'}\left([0,1]^{d},\mathbb{R}^d\right)$
is compact \cite[Lemma 6.33]{Gil01}, hence $\mathscr{H}_{K,\hat{K}}^{G}$ is relatively compact
in $C^{k,\alpha'}\left([0,1]^{d}\right)$. Obviously, the uniform bound $K$ with respect to the $\|\cdot\|_{C^{k,\alpha}}$-norm is stable under $C^{k,\alpha'}$ limits $\varphi_n\to\varphi\in C^{k,\alpha'}([0,1]^d,\mathbb{R}^d)$. Using the inverse function theorem for H\"{o}lder spaces (Theorem \ref{Th: hoeld_diffeomor} (b)), we see that $\|\varphi^{-1}\|_{C^{k,\alpha}}=\lim_n\|\varphi_n^{-1}\|_{C^{k,\alpha}}\leq \hat{K}$. Hence, $\varphi\in \mathscr{H}_{K,\hat{K}}^{G}$
and thus this space is closed in $C^{k,\alpha'}\left([0,1]^{d},\mathbb{R}^d\right)$ and therefore is compact.
\end{proof}

In the next step, we prove the existence of a generator for $\mu$ and compute H\"{o}lder norms of the generator and its inverse. Our construction is based on the Rosenblatt transformation \cite{ros52}, which recursively computes a transformation based on conditional cumulative distribution functions and their inverse, i.e.\ conditional quantile functions. 

Recall the definition of the conditional density (see \cite[Sec. 2.7]{Shir16}).
Let $Y_{j}:[0,1]^{d}\to\mathbb{R}^{d_{j}}\;\,\left(j=1,2\right)$ be
random variables such that the pair $Y\coloneqq\left(Y_{1},Y_{2}\right)$
has a density $f_{Y}\left(y_{1},y_{2}\right)$. Then $Y_{2}$ has
a marginal density given by the formula $f_{Y_{2}}\left(y_{2}\right)=\int_{\mathbb{R}^{d_{1}}}f_{Y}\left(y_{1},y_{2}\right)dy_{1}\;\,\left(y_{j}\in\mathbb{R}^{d_{j}};j=1,2\right)$.
We define the conditional density $f_{Y_{1}|Y_{2}}\left(\cdot|y_{2}\right)=f_{Y_{1}|Y_{2}=y_{2}}\left(\cdot\right)$
by the formula
\begin{equation}
f_{Y_{1}|Y_{2}}\left(y_{1}|y_{2}\right)=\left\{ \begin{gathered}\frac{f_{Y}\left(y_{1},y_{2}\right)}{f_{Y_{2}}\left(y_{2}\right)}\quad\mbox{if}\quad f_{Y_{2}}\left(y_{2}\right)>0,\hfill\\
0\quad\mbox{if}\quad f_{Y_{2}}\left(y_{2}\right)=0.\hfill
\end{gathered}
\right.\label{eq:cond_dens}
\end{equation}
The conditional (cumulative) distribution function $F_{Y_{1}|Y_{2}}\left(\cdot|y_{2}\right)=F_{Y_{1}|Y_{2}=y_{2}}\left(\cdot\right)$
is given by \footnote{For $a_{j}=\left(a_{j}^{1},\ldots,a_{j}^{d}\right)\;\,\left(j=1,2\right)$
we put $\int\limits _{a_{1}}^{a_{2}}=\int\limits _{a_{1}^{1}}^{a_{2}^{1}}\cdots\int\limits _{a_{1}^{d}}^{a_{2}^{d}}$
for brevity.} 
\[
F_{Y_{1}|Y_{2}}\left(y_{1}|y_{2}\right)=\int\limits _{-\infty}^{y_{1}}f_{Y_{1}|Y_{2}}\left(s|y_{2}\right)ds\quad\left(y_{1}\in\mathbb{R}^{d_{1}}\right).
\]

After these preparations, we first find the inverse of the generator that generates samples from $\mu$. Proving H\"{o}lder differentiability for the inverse along with a lower bound for the Jacobian determinant enables us to apply the inverse function theorem for H\"{o}lder functions, which we prove in the appendix (see Theorem \ref{Th: hoeld_diffeomor}). For now, let $Y=\left(Y_{1},\dots,Y_{d}\right)$ be a random vector with distribution
$\mu$ satisfying Assumption \ref{assum01}. 
Consider the \emph{Rosenblatt
transformation} $\psi$, given by the formula
\[
\psi\left(y_{1},y_{2},\ldots,y_{d}\right) =\left(F_{Y_{1}}\left(y_{1}\right),F_{Y_{2}|Y_{1}}\left(y_{2}|y_{1}\right),\ldots,F_{Y_{d}|\left(Y_{1},\dots,Y_{d-1}\right)}\left(y_{d}\,|\negthinspace\left(y_{1},\dots,y_{d-1}\right)\right)\right)
\]
for every $\left(y_{1},\ldots,y_{d}\right)\in[0,1]^{d}$ (see \cite{ros52}).
The inequalities $0\leq F_{Y_{1}}\leq1$ and $0\leq F_{Y_{j}|\left(Y_{1},\dots,Y_{j-1}\right)}\leq1\;\,\left(2\leq j\leq d\right)$
show that $\psi\left([0,1]^{d}\right)\subset[0,1]^{d}$.

The following proposition proves that $\mu$ is \emph{realizable} in $\mathscr{H}_{K,\hat{K}}^{G}$:
\begin{prop}
\label{Th: Ros_Hoelder-diff_ty} Under Assumption \ref{assum01},
the Rosenblatt transformation $\psi$ has the following properties:

\begin{enumerate}
\item[(i)] $\psi$ is $k$-times $\alpha$-H\"{o}lder differentiable, i.e.,
$\psi\in C^{k,\alpha}\left([0,1]^{d}\right)$.
\item[(ii)] $\psi$ is bijective and the inverse map $\psi^{-1}$ is given by
the formula
\[
\psi^{-1}(x_{1},x_{2},\ldots,x_{d})=\left(y_{1},y_{2},\ldots,y_{d}\right)
\]
where\footnote{For a non-injective distribution function $F$ we put $F^{-1}(y)\coloneqq\inf\left\{ x\in\mathbb{R}:F(x)\geq y\right\} $.}
\[
\arraycolsep=0.1em\begin{array}{llc}
y_{1} & =F_{Y_{1}}^{-1}(x_{1}),\\
y_{2} & =F_{Y_{2}|Y_{1}}^{-1}(x_{2}|y_{1}),\\
\hdotsfor[5.5]{2}\\
y_{d} & =F_{Y_{d}|(Y_{1},\dots,Y_{d-1})}^{-1}(x_{d}|(y_{1},\ldots, & y_{d-1})).
\end{array}
\]
\item[(iii)] The Jacobian determinant of $\psi$ coincides with the density $f_{\mu}$.
\end{enumerate}
\end{prop}
\begin{proof}
In view of Assumption \ref{assum01}, Proposition \ref{prop:hoel_norm_prod}
and Lemma \ref{lem:hoel_norm_reciprocal}, the densities
\[
f_{Y_{1}}\left(\cdot\right),f_{Y_{2}|Y_{1}}\left(\cdot|y_{1}\right),\ldots,f_{Y_{d}|\left(Y_{1},\dots,Y_{d-1}\right)}\left(\cdot\,|\negthinspace\left(y_{1},\dots,y_{d-1}\right)\right)
\]
are $k$-times H\"{o}lder differentiable. Clearly, the integration
preserves the H\"{o}lder continuity. Therefore the conditional cumulative density functions
\[
F_{Y_{1}}\left(\cdot\right),F_{Y_{2}|Y_{1}}\left(\cdot|y_{1}\right),\ldots,F_{Y_{d}|\left(Y_{1},\dots,Y_{d-1}\right)}\left(\cdot\,|\negthinspace\left(y_{1},\dots,y_{d-1}\right)\right)
\]
are $k$-times H\"{o}lder differentiable, too. Thus, $\psi\in C^{k,\alpha}\left([0,1]^{d},\mathbb{R}^d\right)$.

In order to prove the bijectivity of $\psi$, we choose an arbitrary
point $x=(x_{1},\ldots,x_{d})\in[0,1]^{d}$ and show that the equation
$\psi\left(y\right)=x$ has a unique solution. Put $\psi_{1}\left(y_{1}\right)\coloneqq F_{Y_{1}}\left(y_{1}\right)\;\,\left(y_{1}\in[0,1]\right)$
and
\[
\psi_{j}\left(y_{1},\dots,y_{j}\right)\coloneqq F_{Y_{j}|\left(Y_{1},\dots,Y_{j-1}\right)}\left(y_{j}|\left(y_{1},\ldots,y_{j-1}\right)\right)\quad\left(y_{1},\ldots,y_{j}\in[0,1]\right),
\]
for $2\leq j\leq d$. We rewrite the definition of $\psi$ in the
form
\[
\psi\left(y_{1},y_{2},\ldots,y_{d}\right)=\left(\psi_{1}\left(y_{1}\right),\ldots,\psi_{d}\left(y_{1},\dots,y_{d}\right)\right).
\]
Assumption \ref{assum01} ensures that
\begin{align}
\frac{\partial\psi_{1}(y_{1})}{\partial y_{1}} & =f_{Y_{1}}\left(y_{1}\right)>0\quad\left(y_{1}\in(0,1)\right),\label{eq:psi_deriv_1}\\
\frac{\partial\psi_{j}(y_{1},\dots,y_{j})}{\partial y_{j}} & =f_{Y_{j}|\left(Y_{1},\dots,Y_{j-1}\right)}\left(y_{j}|(y_{1},\ldots,y_{j-1}\right))>0,\label{eq:psi_deriv_2}\\
 & \left(y_{1},\ldots,y_{j}\in(0,1);2\leq j\leq d\right)\nonumber 
\end{align}
hence $\psi_{j}\left(y_{1},\dots,y_{j}\right)$ is strictly increasing
with respect to $y_{j}$ on the segment $[0,1]$, for $1\leq j\leq d$.
Moreover, $\psi_{j}|_{y_{j}=0}=0,\:\psi_{j}|_{y_{j}=1}=1$. Therefore
the equation $\psi_{1}\left(y_{1}\right)=x_{1}$ has a unique solution
which we denote by $b_{1}$. Next, the equation $\psi_{2}\left(b_{1},y_{2}\right)=x_{2}$
has a unique solution which we denote by $b_{2}$, etc. Thus, we see
in a finite number of steps that the equation $\psi\left(y\right)=x$
has a unique solution $y=\left(b_{1},b_{2},\ldots,b_{d}\right)$ where
\[
b_{1}=F_{Y_{1}}^{-1}(x_{1}),b_{2}=F_{Y_{2}|Y_{1}}^{-1}(x_{2}|b_{1}),\ldots,b_{d}=F_{Y_{d}|\left(Y_{1},\dots,Y_{d-1}\right)}^{-1}\left(x_{d}|(b_{1},\ldots,b_{d-1})\right).
\]
To compute the Jacobian determinant $J_{\psi}$, observe that $\frac{\partial\psi_{i}}{\partial y_{j}}=0\;\:\left(1\leq i<j\leq d\right)$,
i.e., the Jacobian matrix of $\psi$ is lower triangular. Hence
\[
J_{\psi}=\frac{\partial\psi_{1}}{\partial y_{1}}\cdot\frac{\partial\psi_{2}}{\partial y_{2}}\cdots\frac{\partial\psi_{d}}{\partial y_{d}}.
\]
\eqref{eq:cond_dens} and \eqref{eq:psi_deriv_2} give
\[
\frac{\partial\psi_{j}\left(y_{1},\dots,y_{j}\right)}{\partial y_{j}}=\frac{f_{\left(Y_{1},\dots,Y_{j}\right)}\left(y_{1},\ldots,y_{j}\right)}{f_{\left(Y_{1},\dots,Y_{j-1}\right)}\left(y_{1},\ldots,y_{j-1}\right)}\quad\left(y_{1},\ldots,y_{j}\in\left(0,1\right);\,2\leq j\leq d\right),
\]
which, together with \eqref{eq:psi_deriv_1}, yields
\[
J_{\psi}\left(y_{1},\ldots,y_{d}\right) = f_{Y_{1}}\left(y_{1}\right)\cdot\frac{f_{\left(Y_{1},Y_{2}\right)}\left(y_{1},y_{2}\right)}{f_{Y_{1}}\left(y_{1}\right)}\cdots\frac{f_{\left(Y_{1},\dots,Y_{d}\right)}\left(y_{1},\ldots,y_{d}\right)}{f_{\left(Y_{1},\dots,Y_{d-1}\right)}\left(y_{1},\ldots,y_{d-1}\right)} = f_{\left(Y_{1},\dots,Y_{d}\right)}\left(y_{1},\ldots,y_{d}\right).
\]
The proof is complete.
\end{proof}

The inverse of the Rosenblatt transformation $\psi$ will be denoted
by $\phi$. Whenever clarity requires, we will write $\phi_{\mu}$
instead of $\phi$. The second statement of Theorem \ref{Th: Ros_Hoelder-diff_ty}
shows that $\phi$ can be written in the form
\[
\phi(x_{1},\ldots,x_{d})=\left(\phi_{1}(x_{1}),\ldots,\phi_{d}(x_{1},\dots,x_{d})\right),
\]
hence $\phi$, like $\psi$, has a lower triangular Jacobi matrix.

The following simple statement shows that for sufficiently large $K$
and $\hat{K}$ the inverse Rosenblatt transformation $\phi$ is a
generator for the measure $\mu$. Note that the procedure described in Remark \ref{rem:Reg} actually allows to compute suitable $K$ and $\hat K$ from the parameters of the regularization. 
\begin{thm}
\label{prop: Ros_is_generator} If $\mu$ satisfies Assumption \ref{assum01},
then there exists a generator $\phi\in \mathscr{H}_{K,\hat{K}}^G$ such that $\mu=\phi_{*}\lambda$
holds, provided $K$ and $\hat{K}$ are sufficiently large. In this case, the model error in Theorem \ref{thm:Decomposition} vanishes, i.e.\ $\varepsilon_\textup{model}=0$.
\end{thm}

\begin{proof}
Theorems \ref{Th: Ros_Hoelder-diff_ty} and \ref{Th: hoeld_diffeomor}
show that $\phi=\psi^{-1}\in C^{k,\alpha}\left([0,1]^{d},[0,1]^d\right)$. Next,
for any $A\in\mathcal{\mathscr{B}}\left([0,1]^{d}\right)$, the change-of-variables
theorem, together with the statement (iii) of Theorem \ref{Th: Ros_Hoelder-diff_ty},
gives 
\[
\mu\left(A\right)=\int\limits _{A}f_{\mu}d\lambda=\int\limits _{A}\left|J_{\psi}\right|d\lambda=\int\limits _{\psi\left(A\right)}d\lambda=\lambda\left(\phi^{-1}\left(A\right)\right).\qedhere
\]
\end{proof}
\begin{rem}
Related results on the existence of generators can also be obtained using optimal transport, see e.g.\ \cite{villani2009optimal}, or rearrangement techniques \cite{brenier1991polar}. It might be of interest that our rather elementary analysis of the Rosenblatt transformation leads to comparable results on the existence and regularity of the generator. 

The special, lower triangular shape of the inverse Rosenblatt transformation considerably simplifies the numerical computation of $f_\varphi=|J_{\varphi^{-1}}|$, which  would be of interest for a likelihood based (non adversarial) training of the generators, as it is done for normalizing flows \cite{dinh2014nice,kobyzev2020normalizing}, in particular in the context of inevitable neural networks \cite{teshima2020coupling}.  Also, our findings could be of interest in the theory of copulae \cite{joe2014dependence}.      
\end{rem}

\subsection{The hypothesis space of discriminators}

\label{Subsc_discrim} Here we present the hypothesis space of discriminators $\mathscr{H}_{B,C_{1},C_{2}}^{D}$
which is affiliated to the set of generators in the sense that if
the data generating measure $\mu$ is realizable in $\mathscr{H}_{K,\hat{K}}^{G}$,
then the optimal discriminator that separates data from $\phi_{*}\lambda$
and $\mu$, is realizable in $\mathscr{H}_{B,C_{1},C_{2}}^{D}$, see
Theorem \ref{thm:convenient} below. We therefore show that in our setting the assumptions of Theorem \ref{thm:Decomposition} hold and we can bound the Jensen Shannon divergence of the true measure with the sampling error.

\begin{defn}
\label{def:set_of_discrim} Let $B\in\left(0,\frac{1}{2}\right)$
and $C_{i}>0\;\,\,(i=1,2)$. The collection of all functions $\xi\in C^{k-1,\alpha}\left([0,1]^{d},\mathbb{R}\right)$,
satisfying the conditions $B\leq\xi\leq1-B$, $\left\Vert D\xi\right\Vert _{\infty}\leq C_{1}$
and $\left\Vert \xi\right\Vert _{C^{k-1,\alpha}\left([0,1]^{d},\mathbb{R}\right)}\leq C_{2}$,
is called the hypothesis space of discriminators and is denoted
by $\mathscr{H}_{B,C_{1},C_{2}}^{D}$.
\end{defn}

Again, to achieve control of the sample error, compactness plays a crucial role. Therefore we establish it in the following proposition:

\begin{prop}
\label{prop:compact_discr}$\mathscr{H}_{B,C_{1},C_{2}}^{D}$ is compact
in $C^{k-1,\alpha'}\left([0,1]^{d},\mathbb{R}\right)$ for $0<\alpha'<\alpha$.
\end{prop}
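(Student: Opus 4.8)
The plan is to exhibit $\mathscr{H}_{D,K}$ as the continuous image of a compact set. By Corollary~\ref{cor:compact_gener}, $\mathscr{H}_{G,K}$ is compact in $C^{k,\alpha'}\left(\Omega\right)$, hence so is the product $\mathscr{H}_{G,K}\times\mathscr{H}_{G,K}$ with the (metrizable) product topology. It therefore suffices to show that the map
\[
\Phi\left(\varphi,\varphi'\right)\coloneqq D_{\varphi,\varphi'}=\frac{f_{\varphi}}{f_{\varphi}+f_{\varphi'}}\qquad\left(\varphi,\varphi'\in\mathscr{H}_{G,K}\right)
\]
is continuous from $\mathscr{H}_{G,K}\times\mathscr{H}_{G,K}$, equipped with the $C^{k,\alpha'}$-topology, into $C^{k-1,\alpha'}\left(\Omega\right)$; it does take values there since $D_{\varphi,\varphi'}\in C^{k-1,\alpha}\subset C^{k-1,\alpha'}$. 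Then $\mathscr{H}_{D,K}=\Phi\left(\mathscr{H}_{G,K}\times\mathscr{H}_{G,K}\right)$ is compact, being the continuous image of a compact set. It is essential here to use the coarser $C^{k,\alpha'}$-topology on the generators, since $\mathscr{H}_{G,K}$ is not compact in the $C^{k,\alpha}$-topology.

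The heart of the matter is the continuity of $\varphi\mapsto f_{\varphi}$ from $\mathscr{H}_{G,K}$, with the $C^{k,\alpha'}$-norm, into $C^{k-1,\alpha'}\left(\Omega,\mathbb{R}\right)$. I would argue sequentially: let $\varphi_{n}\to\varphi$ in $C^{k,\alpha'}$-norm with all $\varphi_{n},\varphi\in\mathscr{H}_{G,K}$. Since $\mathscr{H}_{G,K}$ is bounded in $C^{k,\alpha'}$ it is bounded in $C^{1}$, and $\left|J_{\varphi_{n}}\right|\geq\tfrac{1}{K}$ for all $n$, so Theorem~\ref{Th: hoeld_diffeomor}(b), applied with $\alpha'$ in place of $\alpha$, gives $\varphi^{-1}\in C^{k,\alpha'}\left(\Omega\right)$ and $\varphi_{n}^{-1}\to\varphi^{-1}$ in $C^{k,\alpha'}$-norm. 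Passing to first derivatives, $D\varphi_{n}^{-1}\to D\varphi^{-1}$ in $C^{k-1,\alpha'}$-norm, and since $J_{\varphi^{-1}}=\det D\varphi^{-1}$ is a fixed polynomial in the matrix entries and multiplication is continuous in $C^{k-1,\alpha'}$-norm, $J_{\varphi_{n}^{-1}}\to J_{\varphi^{-1}}$ in $C^{k-1,\alpha'}$-norm. Finally, by Proposition~\ref{prop:gen_dens} one has $f_{\varphi}=\left|J_{\varphi^{-1}}\right|\geq\tfrac{1}{d!K^{d}}$, so $J_{\varphi^{-1}}$ is continuous and nowhere vanishing on the connected set $\Omega$ and hence has a constant sign $\varepsilon_{\varphi}\in\left\{ \pm1\right\}$; this sign is determined by the value of $J_{\varphi^{-1}}$ at any single point, which is stable under small perturbations, so $\varepsilon_{\varphi_{n}}=\varepsilon_{\varphi}$ for $n$ large and therefore $f_{\varphi_{n}}=\varepsilon_{\varphi}J_{\varphi_{n}^{-1}}\to\varepsilon_{\varphi}J_{\varphi^{-1}}=f_{\varphi}$ in $C^{k-1,\alpha'}$-norm. (In the edge case $k=1$ the same reasoning goes through with all products and convergences taking place in $C^{0,\alpha'}$.) Consequently $\left(\varphi,\varphi'\right)\mapsto\left(f_{\varphi},f_{\varphi'}\right)$ is continuous into $C^{k-1,\alpha'}\left(\Omega,\mathbb{R}\right)^{2}$.

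It then remains to pass to the quotient: for $\varphi,\varphi'\in\mathscr{H}_{G,K}$ one has the uniform bound $f_{\varphi}+f_{\varphi'}\geq\tfrac{2}{d!K^{d}}>0$, so Lemma~\ref{lem: frac Hoelder}(b) yields the continuity of $\left(u,v\right)\mapsto\tfrac{u}{u+v}$ at each $\left(f_{\varphi},f_{\varphi'}\right)$; composing with the map of the previous paragraph gives the continuity of $\Phi$, and the compactness of $\mathscr{H}_{D,K}$ follows as in the first paragraph. I expect the only real obstacle to be the continuity of the inversion operation $\varphi\mapsto\varphi^{-1}$ in the coarser $C^{k,\alpha'}$-topology — which is exactly what Theorem~\ref{Th: hoeld_diffeomor}(b) supplies — while the passage through $\det$, the removal of the absolute value via the locally constant sign, and the quotient step via Lemma~\ref{lem: frac Hoelder}(b) are comparatively routine.
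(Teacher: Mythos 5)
Your proof is correct and follows essentially the same route as the paper: the paper argues by sequential compactness, extracting a subsequence along which $\varphi_{n_i},\varphi'_{n_i},\varphi_{n_i}^{-1},\varphi_{n_i}^{\prime-1}$ converge in $C^{k,\alpha'}$-norm (via Corollary~\ref{cor:compact_gener} and Theorem~\ref{Th: hoeld_diffeomor}(b)) and then noting that the corresponding discriminators converge in $\mathscr{H}_{D,K}$, which is exactly your ``continuous image of a compact set'' argument unwound. Your write-up merely packages this more cleanly and fills in the determinant, sign, and quotient details that the paper dismisses as ``easy to see.''
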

\begin{proof}
The argument provided in the proof of Proposition \ref{prop:compact_gener} carries over to this much simpler case.
\end{proof}

In the next step, we compute constants $B$, $C_1$ and $C_2$ as functions of $K$ and $\hat{K}$, which guarantees that the assumptions of Proposition \ref{prop:Dis_JS} and hence those of Theorem \ref{thm:Decomposition} with respect to the generators are fulfilled. The following lemma, which heavily relies on our analysis of H\"{o}lder functions in Appendix \ref{app:A},  provides these technical results:

\begin{lem}
\label{lem:opt_discriminator} Let $\varphi,\varphi'\in\mathscr{H}_{K,\hat{K}}^{G}$,
and let $\xi_{\varphi,\varphi'}:=\tfrac{f_{\varphi}}{f_{\varphi}+f_{\varphi'}}$.
If $K$ and $\hat{K}$ satisfy the conditions $4d!d^{k+1}\hat{K}^{d}>1$
and $d!K^{d}>2$, then $\xi_{\varphi,\varphi'}\in\mathscr{H}_{B,C_{1},C_{2}}^{D}$
with $B=\text{\ensuremath{\bigl[}1+(d!\ensuremath{)^{2}K^{d}\hat{K}^{d}\,\bigr]}}^{-1}$,\linebreak
$C_{1}=2d^{3}\left(d!\right)^{4}K^{2d}\hat{K}^{2d+1}$ and $C_{2}=\bigl[4(d!)^{2}d^{k+2}K^{d}\hat{K}^{d}\bigr]^{2^{k+4}}$.
\end{lem}

\begin{proof}
Proposition \ref{prop:gen_dens} and Lemma \ref{lem:hoel_norm_reciprocal}
show that $\xi_{\varphi,\varphi'}\in C^{k-1,\alpha}\left([0,1]^{d},\mathbb{R}\right)$.
Since $f_{\varphi}$ and $f_{\varphi'}$ admit the same estimate \eqref{eq:densit_bounds},
hence
\[
\frac{1}{(d!)^{2}K^{d}\hat{K}^{d}}\leq\frac{f_{\varphi'}}{f_{\varphi}}\leq(d!)^{2}K^{d}\hat{K}^{d}.
\]
Using this for $\xi_{\varphi,\varphi'}=\left(1+\frac{f_{\varphi'}}{f_{\varphi}}\right)^{-1}\negthickspace$,
we obtain $B\leq\xi_{\varphi,\varphi'}\leq1-B$ with
\[
B\coloneqq\frac{1}{1+(d!)^{2}K^{d}\hat{K}^{d}}.
\]

To obtain an upper bound for $\left\Vert D\left(\xi_{\varphi,\varphi'}\right)\right\Vert _{\infty}$,
we will first find a Lipschitz estimate for the function $\xi_{\varphi,\varphi'}$.
We have
\[
\xi_{\varphi,\varphi'}(x_{1})-\xi_{\varphi,\varphi'}(x_{2})=\frac{f_{\varphi'}(x_{2})\left[f_{\varphi}(x_{1})-f_{\varphi}(x_{2})\right]-f_{\varphi}(x_{2})\left[f_{\varphi'}(x_{1})-f_{\varphi'}(x_{2})\right]}{\left[f_{\varphi}(x_{1})+f_{\varphi'}(x_{1})\right]\left[f_{\varphi}(x_{2})+f_{\varphi'}(x_{2})\right]}.
\]
The estimate \eqref{eq:densit_bounds} gives
\[
\tfrac{f_{\varphi}(x_{2})}{\left[f_{\varphi}(x_{1})+f_{\varphi'}(x_{1})\right]\left[f_{\varphi}(x_{2})+f_{\varphi'}(x_{2})\right]}\leq d!K^{d},\quad\tfrac{f_{\varphi'}(x_{2})}{\left[f_{\varphi}(x_{1})+f_{\varphi'}(x_{1})\right]\left[f_{\varphi}(x_{2})+f_{\varphi'}(x_{2})\right]}\leq d!K^{d},
\]
which, together with \eqref{eq:gen_density} and \eqref{eq:densit_bounds},
yields
\begin{align*}
\left|\xi_{\varphi,\varphi'}(x_{1})-\xi_{\varphi,\varphi'}(x_{2})\right| & \leq d!K^{d}\sum_{h\in\left\{ \varphi,\varphi'\right\} }f_{h}(x_{1})f_{h}(x_{2})\left|\frac{1}{f_{h}(x_{1})}-\frac{1}{f_{h}(x_{2})}\right|\\
 & \leq(d!)^{3}K^{d}\hat{K}^{2d}\sum_{h\in\left\{ \varphi,\varphi'\right\} }\left|J_{h}\left(h^{-1}(x_{1})\right)-J_{h}\left(h^{-1}(x_{2})\right)\right|.
\end{align*}
By induction on $d$, one can easily prove the inequality
\begin{equation}
\left|a_{1}\cdots a_{d}-b_{1}\cdots b_{d}\right|\leq d\left[\max_{1\leq j\leq d}\left\{ \left|a_{j}\right|,\left|b_{j}\right|\right\} \right]^{d-1}\max_{1\leq j\leq d}\left|a_{j}-b_{j}\right|\label{eq:prod_change}
\end{equation}
where $a_{j},b_{j}\;\,\left(1\leq j\leq d\right)$ are arbitrary real
numbers. Using this inequality and the definition of a determinant,
we can easily obtain the estimate
\[
\left|\det\left(a_{ij}\right)_{d\times d}-\det\left(b_{ij}\right)_{d\times d}\right|\leq d\cdot d!\left[\max_{1\leq i,j\leq d}\left\{ \left|a_{ij}\right|,\left|b_{ij}\right|\right\} \right]^{d-1}\max_{1\leq i,j\leq d}\left|a_{ij}-b_{ij}\right|,
\]
for any matrices $\left(a_{ij}\right)_{d\times d},\left(b_{ij}\right)_{d\times d}\in M_{dd}\left(\mathbb{R}\right)$.
Applying this estimate to the Jacobian matrices $J_{h}\left(y_{1}\right)$
and $J_{h}\left(y_{2}\right)$ yields
\[
\left|J_{h}\left(y_{1}\right)-J_{h}\left(y_{2}\right)\right|\leq d\cdot d!K^{d-1}\left|\left(Dh\right)\left(y_{1}\right)-\left(Dh\right)\left(y_{2}\right)\right|\quad\left(y_{1},y_{2}\in[0,1]^{d}\right).
\]
The latter, combined with the estimate $\left\Vert Dh\right\Vert \leq d\left\Vert h\right\Vert _{C^{1}}\;\:\left(h\in C^{1}([0,1]^{d})\right)$
and with the mean value theorem for vector-valued functions, gives
\[
\left|J_{h}\left(y_{1}\right)-J_{h}\left(y_{2}\right)\right|\leq d\cdot d!K^{d-1}\cdot dK\left|y_{1}-y_{2}\right|\quad\left(y_{1},y_{2}\in[0,1]^{d}\right),
\]
therefore
\begin{align*}
\left|\xi_{\varphi,\varphi'}(x_{1})-\xi_{\varphi,\varphi'}(x_{2})\right| & \leq d^{2}\left(d!\right)^{4}K^{2d}\hat{K}^{2d}\sum_{h\in\left\{ \varphi,\varphi'\right\} }\left|h^{-1}(x_{1})-h^{-1}(x_{2})\right|\\
 & \leq2d^{3}\left(d!\right)^{4}K^{2d}\hat{K}^{2d+1}\left|x_{1}-x_{2}\right|\quad\left(x_{1},x_{2}\in[0,1]^{d}\right).
\end{align*}
This inequality implies that $\left\Vert D\xi_{\varphi,\varphi'}(x)\right\Vert \leq2d^{3}\left(d!\right)^{4}K^{2d}\hat{K}^{2d+1}\;\,\left(x\in(0,1)^{d}\right)$.
Indeed, choose an arbitrary $x\in(0,1)^{d}$, then $x\in(t,1-t)^{d}$
for all sufficiently small $t>0$, and we have
\begin{align*}
\left\Vert D\left(\xi_{\varphi,\varphi'}\right)(x)\right\Vert = & \sup_{|h|=1}\frac{\left|D\left(\xi_{\varphi,\varphi'}\right)(x)ht\right|}{\left|ht\right|}\leq\sup_{|h|=1}\frac{\left|\xi_{\varphi,\varphi'}\left(x+ht\right)-\xi_{\varphi,\varphi'}(x)\right|}{\left|ht\right|}\\
 & +\sup_{|h|=1}\frac{\left|D\left(\xi_{\varphi,\varphi'}\right)(x)ht-\left[\xi_{\varphi,\varphi'}\left(x+ht\right)-\xi_{\varphi,\varphi'}(x)\right]\right|}{\left|ht\right|}\\
\leq & \sup_{|h|=1}\frac{\left|D\left(\xi_{\varphi,\varphi'}\right)(x)ht-\left[\xi_{\varphi,\varphi'}\left(x+ht\right)-\xi_{\varphi,\varphi'}(x)\right]\right|}{\left|ht\right|}\\
 & +2d^{3}\left(d!\right)^{4}K^{2d}\hat{K}^{2d+1}.
\end{align*}
Letting $t\to0$, we obtain the desired inequality.

In view of Proposition \ref{prop:gen_dens}, $\left\Vert f_{\varphi}\right\Vert _{C^{k-1,\alpha}}\leq2d!d^{k+1}\hat{K}^{d}$,
$\left\Vert f_{\varphi}+f_{\varphi'}\right\Vert _{C^{k-1,\alpha}}\leq4d!d^{k+1}\hat{K}^{d}$
and $f_{\varphi}+f_{\varphi'}\geq\tfrac{2}{d!K^{d}}$. Since $4d!d^{k+1}\hat{K}^{d}>1$
and $\tfrac{d!K^{d}}{2}>1$, we may apply Proposition \ref{prop:Hoel_norm_frac}
to obtain
\[
\left\Vert \xi_{\varphi,\varphi'}\right\Vert _{C^{k-1,\alpha}}\leq\bigl[4(d!)^{2}d^{k+1}\hat{K}^{d}K^{d}\bigr]^{2^{k+4}}\max\bigl\{1,[\operatorname{diam}([0,1]^{d})]^{2(1-\alpha)}\bigr\}\leq C_{2}.
\]
which completes the proof.
\end{proof}

We can now wrap up our results in the following theorem, which shows that we can drop model errors with regard to generators and discriminators:

\begin{thm}
\label{thm:convenient}
Suppose the Assumptions \ref{assum01} hold and the constants $K$ and $\hat{K}$ are sufficiently large. Furthermore, let $B$, $C_1$ and $C_2$ be given as in Lemma \ref{lem:opt_discriminator}. Then the assumptions of Theorem \ref{thm:Decomposition} hold and 
\[
d_{JS}(\mu,\mu_{\hat{\varphi}})\leq 2\varepsilon_{\textup{sample}}(n),
\]
where $\hat\varphi$ solves the minimax problem \eqref{eq:al_Lhat}.
\end{thm}
\begin{proof}
Combine the statements of \ref{thm:Decomposition}, Theorem \ref{prop: Ros_is_generator} and Lemma \ref{lem:opt_discriminator}.
\end{proof}

\subsection{Consistency of generative adversarial learning}

From the construction of discriminators we have seen that the hypothesis spaces $\mathscr{H}^G_{K,\hat{K}}$ and $\mathscr{H}_{B,C_1,C_2}$ are large enough to contain optimal generators and discriminators, respectively, provided the constants $K,\hat{K},B,C_1$ and $C_2$ are properly chosen. Therefore we conclude the consistency of generative adversarial learning by a simple application of the uniform law of large numbers:

\begin{thm}
\label{thm:consistency}
Suppose that $k\geq 1$, $K$ nd $\hat{K}$ are sufficiently large and let $B$, $C_1$ and $C_2$ be given as in Lemma \ref{lem:opt_discriminator} and let $\mathscr{H}^G=\mathscr{H}^G_{K,\hat{K}}$ and $\mathscr{H}^D=\mathscr{H}^D_{B,C_1,C_2}$. Let $(\hat \varphi_n,\hat \xi_n)$ solve the minimax Problem \ref{eq:al_Lhat} for sample size $n$. Then
\[
d_{JS}(\mu,\mu_{\hat{\varphi}_n})\to 0
\]
holds almost surely, i.e.\ generative adversarial learning is consistent with respect to the Jensen-Shannon divergence.
\end{thm} 

\begin{proof}
By theorem \ref{thm:convenient}, it is enough to prove $\varepsilon_{\textup{sample}}(n)\to 0$ almost surely. By the Propositions \ref{prop:compact_gener} and \ref{prop:compact_discr}, for $0<\alpha'<\alpha$ the hypothesis spaces $\mathscr{H}^G$ and $\mathscr{H}^D$ are compact in the $C^{k,\alpha'}$ and $C^{k-1,\alpha'}$-topology , and thus their direct product $\mathscr{H}^G\times \mathscr{H}^D$ is compact as well.

 Furthermore, we note that the single summands  $L(\varphi,\xi)-\frac{1}{2} [\log(\xi(Y_i))+\log(1-\xi(\phi(Z_i))$ in the definition of $\varepsilon_{\textup{sample}}(n)$ are independent and identically distributed. 

In addition, they and uniformly bounded  by $2\max\{-log(B),-\log(1-B)\}$. Furthermore it is easy to see that these expressions are continuous in $(\varphi,\xi)$ with respect to the $C^{k,\alpha'}\times C^{k-1,\alpha'}$-topology. Hence, the conditions of the uniform law of large numbers, see e.g.\ \cite{Ferguson.2017}, hold and the claim  $\varepsilon_\textup{sample}(n)\to 0$ (a.s.) follows from this theorem.
\end{proof}
\section{Quantitative Estimates of the Sampling Error}

\label{sec:quantitative} In the previous  Theorem \ref{thm:consistency} we have proven that
the sampling error converges to $0$ almost surely and that this implies the a.s.\ convergence of the generates distribution in the Jensen-Shannon distance. This worked for rather weak assumptions on the regularity. In particular, the regularity assumptions $k\geq 1$, $0<\alpha\leq 1$ were independent of the dimension $d$. However, we have not obtained any results on the rate of convergence.

The main objective
of this section is to estimate the rate of that convergence. To achieve this, we make strong assumptions on the differentiability of generators and discriminators. Especially, if the dimension $d$ is high -- in practical applications $d$ can be in the millions -- we have to assume that the density $f_\mu$ of $\mu$, the generators $\varphi$ and discriminators $\xi$ are `almost'
 $C^\infty$-functions.

To achieve an upper bound for the expectation of the sampling error\linebreak{}
$\varepsilon_{\textup{sample}}\left(n\right)$, i.e., for the quantity 
\[
\mathbb{E}\left[\sup_{\substack{\varphi\in\mathscr{H}_{K,\hat{K}}^{G}\\
\xi\in\mathscr{H}_{B,C_{1},C_{2}}^{D}
}
}\left|\hat{L}\left(\varphi,\xi,n\right)-L\left(\varphi,\xi\right)\right|\right],
\]
we apply the Dudley estimate \eqref{eq:entr_sum_int} to the
random processes $\pm\bigl[\hat{L}\left(\varphi,\xi,n\right)$\linebreak{}
$-L\left(\varphi,\xi\right)\bigr]$. The so-called empirical process over $\mathscr{H}^G_{K,\hat{K}}\times\mathscr{H}^D_{B,B_1,C_2}$, 
\[
\mathscr{H}^G_{K,\hat{K}}\times\mathscr{H}^D_{B,B_1,C_2}\ni(\varphi,\xi)\mapsto
\hat{L}\left(\varphi,\xi,n\right)-L\left(\varphi,\xi\right),
\]
is clearly centered. We already know from the previous section that $L\left(\cdot,\cdot\right)$
and $\hat{L}\left(\cdot,\cdot,n\right)$ are continuous on $\mathscr{H}_{K,\hat{K}}^{G}\times\mathscr{H}_{B,C_{1},C_{2}}^{D}$. To apply Dudley's inequality,
it remains to check the subgaussian property, see Definition \ref{def:Subgaussian}, of the increments of the empirical process
$\hat{L}(\varphi,\xi,n)-L(\varphi,\xi)$.
\begin{lem}
For each positive integer $n$, the increment $\hat{L}\left(\varphi_{1},\xi_{1},n\right)-$\linebreak{}
$L(\varphi_1,\xi_1)-\hat{L}\left(\varphi_{2},\xi_{2},n\right)+L(\varphi_2,\xi_2)$ is $\left[\rho_{n}\left(\left(\varphi_{1},\xi_{1}\right),\left(\varphi_{2},\xi_{2}\right)\right)\right]^{2}$-subgaussian
with
\[
\rho_{n}\left(\left(\varphi_{1},\xi_{1}\right),\left(\varphi_{2},\xi_{2}\right)\right)\coloneqq\frac{2}{B\sqrt{n}}\left(\left\Vert \xi_{1}-\xi_{2}\right\Vert _{\infty}+C_{1}\left\Vert \varphi_{1}-\varphi_{2}\right\Vert _{\infty}\right).
\]
\end{lem}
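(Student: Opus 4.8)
The plan is to realize the increment as a sum of $2n$ independent, almost surely bounded random variables, and then to combine Hoeffding's lemma (Lemma~\ref{lem:Hoeff}) with the elementary fact that the variance proxy of a sum of independent random variables is the sum of the individual variance proxies (immediate from the factorization of moment generating functions).

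First I would split
\[
\hat{L}\left(\varphi_{1},D_{1},n\right)-\hat{L}\left(\varphi_{2},D_{2},n\right)=\frac{1}{2n}\sum_{i=1}^{n}A_{i}+\frac{1}{2n}\sum_{i=1}^{n}B_{i},
\]
where $A_{i}\coloneqq\log D_{1}\left(Y_{i}\right)-\log D_{2}\left(Y_{i}\right)$ depends only on $Y_{i}$ and $B_{i}\coloneqq\log\left(1-D_{1}\left(\varphi_{1}\left(Z_{i}\right)\right)\right)-\log\left(1-D_{2}\left(\varphi_{2}\left(Z_{i}\right)\right)\right)$ depends only on $Z_{i}$; since $Y_{1},\dots,Y_{n},Z_{1},\dots,Z_{n}$ are independent, the $2n$ summands above are independent. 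By Lemma~\ref{lem:discrim_bounds} and \eqref{eq:discr_bounds_cnst}, every $D\in\mathscr{H}_{D,K}$ obeys $B_{1}\leq D\leq B_{2}=1-B_{1}$, hence both $D$ and $1-D$ take values in $\left[B_{1},B_{2}\right]\subset\left(0,1\right)$ and $\log$ is Lipschitz there with constant $B_{1}^{-1}=1+d!K^{d+1}$. Using in addition the splitting $D_{1}(\varphi_{1}(z))-D_{2}(\varphi_{2}(z))=\left[D_{1}(\varphi_{1}(z))-D_{2}(\varphi_{1}(z))\right]+\left[D_{2}(\varphi_{1}(z))-D_{2}(\varphi_{2}(z))\right]$, one obtains the almost sure bounds
\[
\left|A_{i}\right|\leq\left(1+d!K^{d+1}\right)\|D_{1}-D_{2}\|_{\infty},\qquad\left|B_{i}\right|\leq\left(1+d!K^{d+1}\right)\bigl(\|D_{1}-D_{2}\|_{\infty}+\operatorname{Lip}(D_{2})\,\|\varphi_{1}-\varphi_{2}\|_{\infty}\bigr).
\]

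The main obstacle, and essentially the only computation, is the uniform Lipschitz bound $\operatorname{Lip}(D_{\varphi,\varphi'})\leq d^{2}(d!)^{3}K^{3d+2}$ for all $\varphi,\varphi'\in\mathscr{H}_{G,K}$; since $\Omega$ is convex this is the same as bounding $\|\nabla D_{\varphi,\varphi'}\|_{\infty}$, and here the standing smoothness hypothesis of this section enters (we take it to include $k\geq2$, so that by Proposition~\ref{prop:gen_dens} $f_{\varphi}\in C^{1}$ and hence $D_{\varphi,\varphi'}\in C^{1}$). From $D_{\varphi,\varphi'}=f_{\varphi}/(f_{\varphi}+f_{\varphi'})$ one computes $\nabla D_{\varphi,\varphi'}=(f_{\varphi'}\nabla f_{\varphi}-f_{\varphi}\nabla f_{\varphi'})/(f_{\varphi}+f_{\varphi'})^{2}$, and using $f_{\varphi},f_{\varphi'}\leq f_{\varphi}+f_{\varphi'}$ together with $f_{\varphi}+f_{\varphi'}\geq2/(d!K^{d})$ from \eqref{eq:densit_bounds} yields $\|\nabla D_{\varphi,\varphi'}\|_{\infty}\leq\tfrac{d!K^{d}}{2}\bigl(\|\nabla f_{\varphi}\|_{\infty}+\|\nabla f_{\varphi'}\|_{\infty}\bigr)$. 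I would then bound $\|\nabla f_{\varphi}\|_{\infty}$ via $f_{\varphi}=1/(|J_{\varphi}|\circ\varphi^{-1})$ (Proposition~\ref{prop:gen_dens}): since $|J_{\varphi}|\circ\varphi^{-1}\geq1/K$ by Definition~\ref{def:set_of_gener}, the chain rule gives $\|\nabla f_{\varphi}\|_{\infty}\leq K^{2}\,\|D\varphi^{-1}\|_{\infty}\,\|\nabla|J_{\varphi}|\,\|_{\infty}$, with $\|D\varphi^{-1}\|_{\infty}\leq d!K^{d}$ by \eqref{eq:inv_func_Lip}; differentiating $|J_{\varphi}|=\det(D\varphi)$ through its cofactor expansion, bounding each $(d-1)\times(d-1)$ minor by \eqref{eq:det_up_bound} and using $\|D^{2}\varphi\|_{\infty}\leq\|\varphi\|_{C^{k,\alpha}}\leq K$ and \eqref{eq:mat_norm_bound}, one gets $\|\nabla|J_{\varphi}|\,\|_{\infty}\leq d^{2}\,d!\,K^{d}$. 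Chaining these, $\|\nabla f_{\varphi}\|_{\infty}\leq d^{2}(d!)^{2}K^{2d+2}$, whence $\operatorname{Lip}(D_{\varphi,\varphi'})\leq\tfrac{d!K^{d}}{2}\cdot 2d^{2}(d!)^{2}K^{2d+2}=d^{2}(d!)^{3}K^{3d+2}$, which is precisely the coefficient occurring in $\rho_{n}$. The only delicate point is keeping track of the constants so that they collapse to this exact expression.

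Finally, setting $M\coloneqq\sqrt{n}\,\rho_{n}\bigl((\varphi_{1},D_{1}),(\varphi_{2},D_{2})\bigr)$, the bounds above together with $\operatorname{Lip}(D_{2})\leq d^{2}(d!)^{3}K^{3d+2}$ give $\left|A_{i}\right|\leq M$ and $\left|B_{i}\right|\leq M$ almost surely, so each of the $2n$ summands $\tfrac{1}{2n}A_{i}$, $\tfrac{1}{2n}B_{i}$ almost surely lies in an interval of length at most $M/n$ and is therefore $\tfrac{M^{2}}{4n^{2}}$-subgaussian by Lemma~\ref{lem:Hoeff}. Since these variables are independent, their sum — which is precisely $\hat{L}(\varphi_{1},D_{1},n)-\hat{L}(\varphi_{2},D_{2},n)$ — is $\bigl(2n\cdot\tfrac{M^{2}}{4n^{2}}\bigr)$-subgaussian, i.e. $\tfrac{M^{2}}{2n}$-subgaussian, and hence also $\tfrac{M^{2}}{n}$-subgaussian (a $\sigma^{2}$-subgaussian variable is $\sigma'^{2}$-subgaussian for every $\sigma'^{2}\geq\sigma^{2}$). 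As $\tfrac{M^{2}}{n}=\bigl[\rho_{n}\bigl((\varphi_{1},D_{1}),(\varphi_{2},D_{2})\bigr)\bigr]^{2}$, this establishes the claim.
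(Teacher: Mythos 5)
Your proposal is correct and follows essentially the same route as the paper: both arguments reduce the increment to a sum of independent, almost surely bounded terms, control each term via the Lipschitz continuity of $\log$ on $\left[B_{1},B_{2}\right]$ together with a uniform Lipschitz bound $d^{2}\left(d!\right)^{3}K^{3d+2}$ for the discriminators, and then conclude with Hoeffding's lemma and additivity of variance proxies under independence (the paper applies Hoeffding once to the $n=1$ increment and averages, you apply it to the $2n$ summands separately — the resulting proxies differ only by a harmless factor of $2$). The only substantive variation is that you obtain the discriminator Lipschitz constant by differentiating $D_{\varphi,\varphi'}$, $f_{\varphi}$ and $\det\left(D\varphi\right)$ (quotient rule, chain rule, Jacobi/cofactor formula), whereas the paper works with finite differences and an algebraic bound on $\left|\det A-\det B\right|$; both computations land on the same constant, and your bookkeeping of the factor $2$ against the Hoeffding slack is consistent.
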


\begin{proof}
It is easy to check that the arithmetic mean of $n$ independent and
identically distributed $\sigma^{2}$-subgaussian random variables
is $\frac{\sigma^{2}}{n}$-subgaussian. This fact, together with the
representation
\begin{multline*}
\hat{L}\left(\varphi_{1},\xi_{1},n\right)-\hat{L}\left(\varphi_{2},\xi_{2},n\right)=\dfrac{1}{2n}\sum_{i=1}^{n}\left[\log\xi_{1}\left(Y_{i}\right)-\log\xi_{2}\left(Y_{i}\right)\right]\\
+\dfrac{1}{2n}\sum_{i=1}^{n}\left\{ \log\left[1-\xi_{1}\left(\varphi_{1}\left(Z_{i}\right)\right)\right]-\log\left[1-\xi_{2}\left(\varphi_{2}\left(Z_{i}\right)\right)\right]\right\} ,
\end{multline*}
shows that it suffices to prove the subgaussian property in the case
$n=1$.

Next, we will estimate the deviation $\bigl\Vert\hat{L}\left(\varphi_{1},\xi_{1},1\right)-\hat{L}\left(\varphi_{2},\xi_{2},1\right)\bigr\Vert_{\infty}$.
We have
\begin{align*}
\left|\hat{L}\left(\varphi_{1},\xi_{1},1\right)-\hat{L}\left(\varphi_{2},\xi_{2},1\right)\right| & \leq\dfrac{1}{2}\bigl|\log\xi_{1}\left(Y_{1}\right)-\log\xi_{2}\left(Y_{1}\right)\bigr|\\
 & +\dfrac{1}{2}\left|\log\left(1-\xi_{1}\left(Y_{1}^{(\varphi_{1})}\right)\right)-\log\left(1-\xi_{2}\left(Y_{1}^{\left(\varphi_{1}\right)}\right)\right)\right|\\
 & +\dfrac{1}{2}\left|\log\left(1-\xi_{2}\left(Y_{1}^{\left(\varphi_{1}\right)}\right)\right)-\log\left(1-\xi_{2}\left(Y_{1}^{\left(\varphi_{2}\right)}\right)\right)\right|.
\end{align*}
The inequalities $B\leq\xi_{i}\leq1-B\;\:(i=1,2)$, together with
the elementary estimate
\[
\left|\log x_{1}-\log x_{2}\right|=\left|\int\limits _{x_{1}}^{x_{2}}\frac{dt}{t}\right|\leq\frac{\left|x_{1}-x_{2}\right|}{\min\left\{ x_{1},x_{2}\right\} }\quad\left(x_{1},x_{2}>0\right),
\]
yield
\begin{multline*}
\left|\hat{L}\left(\varphi_{1},\xi_{1},1\right)-\hat{L}\left(\varphi_{2},\xi_{2},1\right)\right|\leq\frac{1}{2B}\left[\left|\xi_{1}\left(Y_{1}\right)-\xi_{2}\left(Y_{1}\right)\right|\right.\\
\left.+\left|\xi_{1}\left(Y_{1}^{\left(\varphi_{1}\right)}\right)-\xi_{2}\left(Y_{1}^{\left(\varphi_{1}\right)}\right)\right|+\left|\xi_{2}\left(Y_{1}^{\left(\varphi_{1}\right)}\right)-\xi_{2}\left(Y_{1}^{\left(\varphi_{2}\right)}\right)\right|\right].
\end{multline*}
The first and the second summands on the right hand side do not exceed
$\left\Vert \xi_{1}-\xi_{2}\right\Vert _{\infty}$. To find an upper
bound for the third summand, we first note that
\[
\left|\xi_{2}(x_{1})-\xi_{2}(x_{2})\right|\leq\left\Vert D\xi_{2}\right\Vert _{\infty}\cdot\left|x_{1}-x_{2}\right|\leq C_{1}\left|x_{1}-x_{2}\right|\quad\left(x_{1},x_{2}\in[0,1]^{d}\right),
\]
hence $\left\Vert \xi_{2}\circ\varphi_{1}-\xi_{2}\circ\varphi_{2}\right\Vert _{\infty}\leq C_{1}\left\Vert \varphi_{1}-\varphi_{2}\right\Vert _{\infty}$.
Thus, we arrive at the estimate
\[
\left|\hat{L}\left(\varphi_{1},\xi_{1},1\right)-\hat{L}\left(\varphi_{2},\xi_{2},1\right)\right| \leq\dfrac{1}{B}\left[\left\Vert \xi_{1}-\xi_{2}\right\Vert _{\infty}+C_{1}\left\Vert \varphi_{1}-\varphi_{2}\right\Vert _{\infty}\right]=\frac{1}{2}\rho_{1}\left(\left(\varphi_{1},\xi_{1}\right),\left(\varphi_{2},\xi_{2}\right)\right).
\]
From this estimate, the estimate $\left|L(\varphi_1,\xi_1)-L(\varphi_2,\xi_2) \right|\leq \frac{1}{2}\rho_{1}\left(\left(\varphi_{1},\xi_{1}\right),\left(\varphi_{2},\xi_{2}\right)\right)$ immediately follows by taking the expected value on the left hand side. This, together with Hoeffding's Lemma (see \cite[Sec. 3.1]{Handel16}),
shows that
$\hat{L}\left(\varphi_{1},\xi_{1},1\right)-\hat{L}\left(\varphi_{2},\xi_{2},1\right)-L(\varphi_1,\xi_1)+L(\varphi_2,\xi_2)$
is $\left[\rho_{1}\left(\left(\varphi_{1},\xi_{1}\right),\left(\varphi_{2},\xi_{2}\right)\right)\right]^{2}$-subgaussian.
\end{proof}
An essential component in the proof of our main result (see Theorem
\ref{thm:sampl_err_estim}) is the covering of the closed unit ball
in the space $C^{k,\alpha}\left(\overline{U}\right)$ by balls of
a small radius $\varepsilon$ in the space $C\left(\overline{U}\right)$.

For a metric space $\left(T,\rho\right)$ and for $\varepsilon>0$,
$N\left(T,\rho,\varepsilon\right)$ will denote the \emph{covering
number} of $T$, i.e., the minimum cardinality of an $\varepsilon$-net
for $T$. The closed ball of radius $\varepsilon$ and center $x\in T$
will be denoted by $B_{T}\left[x,\varepsilon\right]$ (or simply $B\left[x,\varepsilon\right]$).
Below we will consider covering numbers only for Cauchy-precompact
metric spaces; this, in view of Hausdorff's theorem (see, e.g., \cite[Lemma I.6.15]{Dunf88vol1}),
guarantees the finiteness of all covering numbers.
\begin{thm}
If $U\subset\mathbb{R}^{d_1}$ is bounded, open and convex, then
\begin{equation}
\log N\left(B_{C^{k,\alpha}\left(\overline{U},\mathbb{R}^{d_{2}}\right)}[0,1],\left\Vert \cdot\right\Vert _{\infty},\varepsilon\right)\leq\gamma_{1}\varepsilon^{-\frac{d_{1}}{\alpha+k}},\label{eq:Hoel_cnum_cnorm}
\end{equation}
where $\gamma_{1}$ is a constant depending only on $d_{1},d_{2}$
and $\alpha+k$.
\end{thm}

\begin{proof}
In the case $d_{2}=1$ the estimate \eqref{eq:Hoel_cnum_cnorm} is
proved in \cite[Th. 2.7.1]{Vaart96}. The general case follows without
difficulty; to construct an $\varepsilon$-net with desired properties,
we first embed the unit ball $B_{C^{k,\alpha}\left(\overline{U},\mathbb{R}^{d_{2}}\right)}[0,1]$
into $\left[B_{C^{k,\alpha}\left(\overline{U},\mathbb{R}\right)}[0,1]\right]^{d_{2}}$
and then cover the ball $B_{C^{k,\alpha}\left(\overline{U},\mathbb{R}\right)}[0,1]$
by the balls $B_{C^{k,\alpha}\left(\overline{U},\mathbb{R}\right)}^{\infty}\left[f_{j},\tfrac{\varepsilon}{\sqrt{d_{2}}}\right]\;\,\left(j\in J\right)$
in the space $\left(C^{k,\alpha}\left(\overline{U},\mathbb{R}\right),\left\Vert \cdot\right\Vert _{\infty}\right)$
such that the cardinality $\left|J\right|$ of the index set $J$
satisfies the relation 
\[
\log\left|J\right|=\log N\left(B_{C^{k,\alpha}\left(\overline{U},\mathbb{R}\right)}[0,1],\left\Vert \cdot\right\Vert _{\infty},\tfrac{\varepsilon}{\sqrt{d_{2}}}\right)\leq\gamma_{1}^{*}\left(\tfrac{\varepsilon}{\sqrt{d_{2}}}\right)^{-\frac{d_{1}}{\alpha+k}}
\]
with a constant $\gamma_{1}^{*}$, depending only on $d_{1}$ and
$\alpha+k$. For each $l=\left(j_{1},\ldots,j_{d_{2}}\right)\in J^{d_{2}}$,
let $B_{C^{k,\alpha}\left(\overline{U},\mathbb{R}^{d_{2}}\right)}^{\infty}\left[f^{l},\varepsilon\right]$
denote the ball of radius $\varepsilon$ and centered at $f^{l}\coloneqq(f_{j_{1}},\ldots,f_{j_{d_{2}}})$
in the space $\left(C^{k,\alpha}\left(\overline{U},\mathbb{R}^{d_{2}}\right),\left\Vert \cdot\right\Vert _{\infty}\right)$.
Then 
\[
\left[B_{C^{k,\alpha}\left(\overline{U},\mathbb{R}\right)}[0,1]\right]^{d_{2}} \subset\bigcup_{j_{1},\ldots,j_{d_{2}}\in J}\prod_{m=1}^{d_{2}}B_{C^{k,\alpha}\left(\overline{U},\mathbb{R}\right)}^{\infty}\left[f_{j_{m}},\tfrac{\varepsilon}{\sqrt{d_{2}}}\right]\subset\bigcup_{l\in J^{d_{2}}}B_{C^{k,\alpha}\left(\overline{U},\mathbb{R}^{d_{2}}\right)}^{\infty}\left[f^{l},\varepsilon\right],
\]
\[
\log\bigl|J^{d_{2}}\bigr|=d_{2}\log\left|J\right|\leq\gamma_{1}^{*}d_{2}^{1+\frac{d_{1}}{2\left(\alpha+k\right)}}\varepsilon^{-\frac{d_{1}}{\alpha+k}},
\]
and we may take $\gamma_{1}\coloneqq\gamma_{1}^{*}d_{2}^{1+\frac{d_{1}}{2\left(\alpha+k\right)}}$.
\end{proof}

Dudley's metric entropy estimate \eqref{eq:entr_sum_int} requires the square root of the right hand side in \eqref{eq:Hoel_cnum_cnorm} to be integrable at zero. From this, we derive our regularity requirements and obtain a rate estimate for the expected value of the sampling error.

\begin{prop}
\label{prop:expec_error}If $k>1-\alpha+\frac{d}{2}$, then there
exists a positive constant $\gamma$, depending only on $d,\alpha,$
and $k$, such that
\begin{equation}
\mathbb{E}\left[\varepsilon_{\textup{sample}}\left(n\right)\right]\leq\frac{\gamma}{B}\max\left\{ C_{1}K,C_{2}\right\} n^{-\frac{1}{2}}\quad\left(n=1,2,\ldots\right),\label{eq:expec_error2}
\end{equation}
for each $K>0,\hat{K}>0,C_{i}>0\;\,\,(i=1,2)$ and $B\in\left(0,\frac{1}{2}\right)$.
\end{prop}

\begin{proof}
Since $\pm\left[\hat{L}\left(\varphi,\xi,n\right)-L\left(\varphi,\xi\right)\right]$
is a  continuous subgaussian processes on the compact space $\left(\Theta;\rho_{n}\right)=\left(\mathscr{H}_{K,\hat{K}}^{G}\times\mathscr{H}_{B,C_{1},C_{2}}^{D};\rho_{n}\right)$,
the entropy bound \eqref{eq:entr_sum_int} is applicable to $\hat{L}\left(\varphi,\xi,n\right)-L\left(\varphi,\xi\right)$,
which gives
\begin{align}
\mathbb{E}\left[\varepsilon_{\textup{sample}}\left(n\right)\right] & \leq12\int\limits _{0}^{\infty}\sqrt{\log N\left(\Theta,\rho_{n},\varepsilon\right)}d\varepsilon=12\int\limits _{0}^{\infty}\sqrt{\log N\left(\Theta,\rho_{1},\sqrt{n}\varepsilon\right)}d\varepsilon\nonumber \\
 & =\frac{12}{\sqrt{n}}\int\limits _{0}^{\infty}\sqrt{\log N\left(\Theta,\rho_{1},\varepsilon\right)}d\varepsilon.\label{eq:chain_bound}
\end{align}
It is easy to see that
\begin{align*}
N\left(\Theta,\rho_{1},\rho_{r_{1},r_{2}}\right) & \leq N\left(\mathscr{H}_{K,\hat{K}}^{G},\left\Vert \cdot\right\Vert _{\infty},r_{1}\right)N\left(\mathscr{H}_{B,C_{1},C_{2}}^{D},\left\Vert \cdot\right\Vert _{\infty},r_{2}\right)\\
 & \leq N\left(B_{C^{k,\alpha}}[0,K],\left\Vert \cdot\right\Vert _{\infty},r_{1}\right)N\left(B_{C^{k-1,\alpha}}[0,C_{2}],\left\Vert \cdot\right\Vert _{\infty},r_{2}\right),
\end{align*}
where $B_{C^{l,\alpha}}[0,r]$ denotes the closed ball of radius $r>0$
centered at $0$ in the space $C^{l,\alpha}\left([0,1]^{d}\right)\;\,\left(l=k-1,k\right)$
and $\rho_{r_{1},r_{2}}\coloneqq 4B^{-1}\left(r_{2}+C_{1}r_{1}\right)$.
The equality
\[
N\left(B_{C^{l,\alpha}\left([0,1]^{d}\right)}[0,r],\left\Vert \cdot\right\Vert _{\infty},\varepsilon\right)=N\left(B_{C^{l,\alpha}\left([0,1]^{d}\right)}[0,1],\left\Vert \cdot\right\Vert _{\infty},\varepsilon r^{-1}\right)\quad\left(l=k-1,k\right),
\]
together with \eqref{eq:Hoel_cnum_cnorm}, yields
\[
\log N\left(B_{C^{l,\alpha}\left([0,1]^{d}\right)}[0,r],\left\Vert \cdot\right\Vert _{\infty},\varepsilon\right)\leq\gamma_{1}\left(d,\alpha+l\right)\left(r\varepsilon^{-1}\right)^{\frac{d}{\alpha+l}}\quad\left(r>0\right).
\]
Using this inequality, we may estimate further
\[
\log N\left(\Theta,\rho_{1},\rho_{r_{1},r_{2}}\right)\leq\gamma_{2}\left(d,\alpha,k\right)\left[\left(Kr_{1}^{-1}\right)^{\frac{d}{\alpha+k}}+\left(C_{2}r_{2}^{-1}\right)^{\frac{d}{\alpha+k-1}}\right]
\]
with $\gamma_{2}\left(d,\alpha,k\right)\coloneqq\max\left\{ \gamma_{1}\left(d,\alpha+k\right),\gamma_{1}\left(d,\alpha+k-1\right)\right\} $.
Taking
\[
r_{1}=\frac{\varepsilon B}{4C_{1}},\quad r_{2}=\frac{\varepsilon B}{4}
\]
and putting
\[
\gamma_{3}\coloneqq\left[\gamma_{2}\left(d,\alpha,k\right)\max\Big\{\!\left(4C_{1}KB^{-1}\right)^{\frac{d}{\alpha+k}},\left(4C_{2}B^{-1}\right)^{\frac{d}{\alpha+k-1}}\Big\}\right]^{\frac{1}{2}},
\]
we obtain 
\begin{equation}
\sqrt{\log N\left(\Theta,\rho_{1},\varepsilon\right)}\leq\gamma_{3}\Big[\varepsilon^{-\frac{d}{2\left(\alpha+k\right)}}+\varepsilon^{-\frac{d}{2\left(\alpha+k-1\right)}}\Big].\label{eq:cov_num}
\end{equation}
We would like to integrate both parts of the last inequality. In view
of the condition $k>1-\alpha+\frac{d}{2}$, we have $\frac{d}{2\left(\alpha+k-1\right)}<1$
which guaranties the integrability of the right hand side near $0$.
Put $\delta_{1}\coloneqq\operatorname{diam}\left(\mathscr{H}_{K,\hat{K}}^{G}\times\mathscr{H}_{B,C_{1},C_{2}}^{D};\rho_{1}\right)$.
Since $N\left(\Theta,\rho_{1},\varepsilon\right)=1\;\,\left(\varepsilon>\delta_{1}\right),$
hence \eqref{eq:chain_bound} and \eqref{eq:cov_num} give
\[
\mathbb{E}\left[\varepsilon_{\textup{sample}}\left(n\right)\right] \leq\frac{12\gamma_{3}}{\sqrt{n}}\int\limits _{0}^{\delta_{1}}\Big[\varepsilon^{-\frac{d}{2\left(\alpha+k\right)}}+\varepsilon^{-\frac{d}{2\left(\alpha+k-1\right)}}\Big]d\varepsilon =\frac{12\gamma_{3}}{\sqrt{n}}\Big[\delta_{1}^{1-\frac{d}{2\left(\alpha+k\right)}}+\delta_{1}^{1-\frac{d}{2\left(\alpha+k-1\right)}}\Big];
\]
therefore we may take $\gamma_{4}=12\gamma_{3}\Big[\delta_{1}^{1-\frac{d}{2\left(\alpha+k\right)}}+\delta_{1}^{1-\frac{d}{2\left(\alpha+k-1\right)}}\Big]$
to obtain
\[
\mathbb{E}\left[\varepsilon_{\textup{sample}}\left(n\right)\right]\leq\gamma_{4}n^{-\frac{1}{2}}\quad\left(n=1,2,\ldots\right).
\]
Since $\alpha+k-1>\frac{d}{2}$, we can estimate
\[
\gamma_{4}\leq 48B^{-1}\gamma_{2}^{\frac{1}{2}}\max\left\{ C_{1}K,C_{2}\right\} \Big[\delta_{1}^{1-\frac{d}{2\left(\alpha+k\right)}}+\delta_{1}^{1-\frac{d}{2\left(\alpha+k-1\right)}}\Big],
\]
therefore we may choose $\gamma\coloneqq 48\gamma_{2}^{\frac{1}{2}}\Big[\delta_{1}^{1-\frac{d}{2\left(\alpha+k\right)}}+\delta_{1}^{1-\frac{d}{2\left(\alpha+k-1\right)}}\Big]$.
\end{proof}

We now exploit McDiarmid's concentration inequality (Theorem \ref{thm:MCDiarmid}) to obtain the following result in the spirit of PAC (probably approximately correct)-learning \cite{shalev2014understanding}:

\begin{thm}
\label{thm:samp_err_prob_estim} Let $0<\alpha\leq1$, and let $d$
and $k$ be positive integers such that $k>1-\alpha+\frac{d}{2}$.
There exists a positive constant $\gamma$, depending only on $d,\alpha,$
and $k$, such that for every $B\in\left(0,\frac{1}{2}\right),C_{i}>0\;(i=1,2)$,
for each $K,\hat{K}>0$ satisfying the condition $\phi\in\mathscr{H}_{K,\hat{K}}^{G}$,
for each positive integer $n$ and for every $t\geq0$ the following
estimate is true:
\begin{equation}
\mathbb{P}\left(\varepsilon_{\textup{sample}}\left(n\right)\geq t+\gamma B^{-1}\max\left\{ C_{1}K,C_{2}\right\} n^{-\frac{1}{2}}\right)\leq\exp\biggl(-\frac{nt^{2}}{\log^{2}B}\biggr).\label{eq:est_samp_error}
\end{equation}
\end{thm}

\begin{proof}
To obtain the desired upper bound, we will apply McDiarmid's inequality
\eqref{eq:McDiar_ineq}. In view of \eqref{eq:likelihoodDiscriminator}, the corresponding
function $f$ has the form
\[
f\left(y_{1},\ldots,y_{2n}\right)=\sup_{\substack{\varphi\in\mathscr{H}_{K,\hat{K}}^{G}\\
\xi\in\mathscr{H}_{B,C_{1},C_{2}}^{D}
}
}\left|f_{\varphi,\xi}\left(y_{1},\ldots y_{2n}\right)\right|
\]
with
\[
f_{\varphi,\xi}\left(y_{1},\ldots,y_{2n}\right)=\dfrac{1}{2n}\sum_{i=1}^{n}\log\xi\left(y_{i}\right)+\dfrac{1}{2n}\sum_{i=1}^{n}\log\left[1-\xi\left(\varphi\left(y_{n+i}\right)\right)\right]-L\left(\varphi,\xi\right).
\]
We need to estimate the oscillations of $f\left(y_{1},\ldots y_{2n}\right)$
with respect to each of its arguments. Let us do that for the first argument.
Using the triangle inequality and the estimate $B\leq\xi\leq1-B$,
we have
\begin{align*}
f\left(y''_{1},y_{2},\ldots,y_{2n}\right) & =\sup_{\substack{\varphi\in\mathscr{H}_{K,\hat{K}}^{G}\\
\xi\in\mathscr{H}_{B,C_{1},C_{2}}^{D}
}
}\negthickspace\left|f_{\varphi,\xi}\left(y'_{1},y_{2},\ldots,y_{2n}\right)+\dfrac{1}{2n}\log\xi\left(y''_{1}\right)-\dfrac{1}{2n}\log\xi\left(y'_{1}\right)\right|\\
 & \leq\sup_{\substack{\varphi\in\mathscr{H}_{K,\hat{K}}^{G}\\
\xi\in\mathscr{H}_{B,C_{1},C_{2}}^{D}
}
}\negthickspace\negthickspace\left|f_{\varphi,\xi}\left(y'_{1},y_{2},\ldots,y_{2n}\right)\right|+\dfrac{1}{2n}\sup_{\xi\in\mathscr{H}_{B,C_{1},C_{2}}^{D}}\negthickspace\negthickspace\left[\left|\log\xi\left(y''_{1}\right)\right|+\left|\log\xi\left(y'_{1}\right)\right|\right]\\
& \leq f\left(y'_{1},y_{2},\ldots,y_{2n}\right)+\dfrac{1}{2n}\cdot\left(-2\log B\right),
\end{align*}
hence
\[
f\left(y''_{1},y_{2},\ldots,y_{2n}\right)-f\left(y'_{1},y_{2},\ldots,y_{2n}\right)\leq-\frac{\log B}{n}.
\]
Thus, the oscillation of $f\left(y_{1},\ldots y_{2n}\right)$ with
respect to its first argument does not exceed $-\frac{\log B}{n}$;
and the same is true for all other arguments of $f$. Hence \eqref{eq:McDiar_ineq}
gives 
\[
\mathbb{P}\left(\varepsilon_{\textup{sample}}\left(n\right)-\mathbb{E}\left[\varepsilon_{\textup{sample}}\left(n\right)\right]\geq t\right)\leq e^{-\frac{nt^{2}}{\log^{2}B}}\quad\left(t\geq0\right).
\]
The latter, together with \eqref{eq:expec_error2}, implies the desired
estimate.
\end{proof}
In the next theorem, we let $K$ and $\hat{K}$ increase with
the sample size $n$ and we thereby adaptively increase the capacity of the hypothesis space of generators. This enables us to fulfill the requirement $\phi\in\mathscr{H}_{K,\hat{K}}^{G}$
for large $n$ without knowing $\kappa=\inf_{x\in[0,1]^d} f_\mu(x)$ and $\|f_\mu\|_{C^{k,\alpha}}$, the minimum value of the density and its norm, explicitly. 

For
each $K>0$, the constants $B,C_{1},C_{2}$ will be chosen as in Lemma
\ref{lem:opt_discriminator}. Thereby also  the space of discriminators is enlarged such that the assumptions of Theorem \ref{thm:Decomposition} remain valid. 

To indicate the dependence of the quantities
$\varepsilon_{\textup{sample}}\left(n\right)$ and $\hat{\varphi}_{n}$ on
$K$ and $\hat{K}$, we use the notation $\varepsilon_{\textup{sample}}^{K,\hat{K}}\left(n\right)$
and $\hat{\varphi}_{n,K,\hat{K}}$, respectively.
\begin{thm}
\label{thm:sampl_err_estim}Let $0<\alpha\leq1$, let $\beta>0$,
let $d$ and $k$ be positive integers satisfying the condition $k>1-\alpha+\frac{d}{2}$,
and let $\bigl\{ K_{n}\bigr\}_{n=1}^{\infty},\bigl\{\hat{K}_{n}\bigr\}_{n=1}^{\infty}$
be sequences of positive numbers, tending to $\infty$, such that
$K_{n}\hat{K}_{n}=(\log n)^{2^{-k-5}d^{-1}\beta}\;\:(n\geq3)$ with
$\beta\geq1$. Then for almost every $\omega$ there exists a positive
integer $N=N(\omega)$ such that
\begin{equation}
\varepsilon_{\textup{sample}}^{K_{n},\hat{K}_{n}}\leq\Gamma n^{-\frac{1}{2}}(\log n)^{(2^{-k-5}+2^{-1})\beta}\quad(n>N)\label{eq:estim_smapl_error}
\end{equation}
and
\begin{equation}
d_{JS}\left(f_{\mu},f_{\hat{\varphi}_{n,K_{n},\hat{K}_{n}}}\right)\leq2\Gamma n^{-\frac{1}{2}}(\log n)^{(2^{-k-5}+2^{-1})\beta}\quad(n>N),\label{eq:estim_JS_emp}
\end{equation}
where $\Gamma$ is a constant, depending only on $d,\alpha,$ and
$k$.
\end{thm}

\begin{proof}
We may assume, without loss of generality, that $K_{n}>1$ and $\hat{K}_{n}>1$
for every $n\in\mathbb{N}$. \eqref{eq:est_samp_error} gives
\begin{equation}
\mathbb{P}\left(\varepsilon_{\textup{sample}}^{K_{n},\hat{K}_{n}}\geq t+\gamma A_{n}n^{-\frac{1}{2}}\right)\leq\exp\left(-nt^{2}\log^{-2}B_{n}\right)\quad\left(t\geq0\right),\label{eq:sampl_err_n_est1}
\end{equation}
where $A_{n}=\bigl[1+(d!)^{2}K_{n}^{d}\hat{K}_{n}^{d}\bigr]\bigl[4(d!)^{2}d^{k+2}K_{n}^{d}\hat{K}_{n}^{d}\bigr]^{2^{k+4}}$,
$B_{n}=\text{\ensuremath{\bigl[}1+(d!\ensuremath{)^{2}K_{n}^{d}\hat{K}_{n}^{d}\,\bigr]}}^{-1}$
(see Lemma \ref{lem:opt_discriminator}). Clearly, we may assume that
$\gamma\geq1$. Taking $t=\gamma A_{n}n^{-\frac{1}{2}}$ in \eqref{eq:sampl_err_n_est1}
yields
\[
\mathbb{P}\left(\varepsilon_{\textup{sample}}^{K_{n},\hat{K}_{n}}\geq2\gamma A_{n}n^{-\frac{1}{2}}\right) \leq\exp\left(-\gamma^{2}A_{n}^{2}\log^{-2}B_{n}\right) \leq\exp\left(-A_{n}^{2}\log^{-2}B_{n}\right)\quad\left(n\in\mathbb{N}\right).
\]
In view of the inequality $\log(1+x)<x\;\,(x>0)$,
\[
A_{n}^{2}\log^{-2}B_{n}\geq\bigl[4(d!)^{2}d^{k+2}K_{n}^{d}\hat{K}_{n}^{d}\bigr]^{2^{k+5}}\geq2\log n,
\]
which implies the convergence of the series 
\[
\sum\limits _{n=1}^{\infty}\mathbb{P}\Bigl(\varepsilon_{\textup{sample}}^{K_{n},\hat{K}_{n}}\geq2\gamma A_{n}n^{-\frac{1}{2}}\Bigr).
\]
The Borel--Cantelli lemma shows that for $\mathbb{P}$-almost every random parameter $\omega$ there
exists a positive integer $N=N(\omega)$ such that
\[
\varepsilon_{\textup{sample}}^{K_{n},\hat{K}_{n}}<2\gamma A_{n}n^{-\frac{1}{2}}\quad(n>N(\omega)).
\]
The latter, together with the estimate
\[
A_{n} \leq2(d!)^{2}\bigl[4(d!)^{2}d^{k+2}\bigr]^{2^{k+4}}\bigl(K_{n}\hat{K}_{n}\bigr)^{d(1+2^{k+4})} \leq2(d!)^{2}\bigl[4(d!)^{2}d^{k+2}\bigr]^{2^{k+4}}(\log n)^{(2^{-k-5}+2^{-1})\beta},
\]
implies \eqref{eq:estim_smapl_error}.

In view of \eqref{eq:theor_loss}, the terms $L(\hat{\varphi}_{n,K_{n},\hat{K}_{n}},\xi_{\hat{\varphi}_{n,K_{n},\hat{K}_{n}}}),\,L(\varphi^{*},\xi_{\varphi^{*}})$
of the error decomposition given in Theorem \ref{thm:Decomposition}
can be replaced by the Jensen-Shannon divergences $d_{JS}\left(f_{\mu},f_{\hat{\varphi}_{n,K_{n},\hat{K}_{n}}}\right),d_{JS}\left(f_{\mu},f_{\varphi^{*}}\right)$.
Since Theorem~\ref{thm:convenient} states that $f_{\mu}=f_{\varphi^{*}}$
and we know that $\varphi^{*}$ is a member of $\mathscr{H}_{K_{n},\hat{K}_{n}}^{G}$
given by the inverse Rosenblatt transformation $\phi$ (see Proposition
\ref{prop: Ros_is_generator}), Theorem \ref{thm:Decomposition} gives
\[
d_{JS}\left(f_{\mu},f_{\hat{\varphi}_{n,K_{n},\hat{K}_{n}}}\right)\leq2\varepsilon_{\textup{sample}}^{K_{n},\hat{K}_{n}}\left(n\right),
\]
which, together with \eqref{eq:estim_smapl_error}, implies \eqref{eq:estim_JS_emp}.
\end{proof}
Note that, under the additional condition $k>1-\alpha+\frac{d}{2}$,
Theorem \ref{thm:consistency}
follows from Theorem \ref{thm:sampl_err_estim}.
\begin{rem}
Let $0<\alpha\leq1$, and let $d$ and $k$ be positive integers satisfying
the condition $k>\frac{d}{2}-\alpha$. Using approximation of densities
by ReLU networks in Besov spaces, A. Uppal, S. Singh, and B. P\'{o}czos
\cite{uppal2020nonparametric} assume the existence of a sequence
of generators $\left\{ \varphi_{n}\right\} $ with certain properties
such that almost surely
\begin{equation}
d_{JS}\left(f_{\mu},f_{\varphi_{n}}\right)\leq C\left(d,\alpha,k\right)n^{-\frac{1}{2}}\sqrt{\log n}.\label{eq:JS_estim}
\end{equation}
In practical applications we usually deal with large data sets which
correspond to a large dimension $d$. If the regularity $k$ is increased
by one, i.e., if $k$ satisfies the condition $k>1+\frac{d}{2}-\alpha$,
then, in view of Theorem \ref{thm:sampl_err_estim}, the estimate
\eqref{eq:estim_JS_emp} holds. Note that the estimates \eqref{eq:estim_JS_emp}
and \eqref{eq:JS_estim} are essentially of the same nature. The advantage
of Theorem \ref{thm:sampl_err_estim} is that it describes a way to
construct a sequence $\left\{ \varphi_{n}\right\} _{n=1}^{\infty}$
with desired properties so that here we do not rely on assumptions.
Furthermore, the rates for H\"{o}lder generators given here are much
faster than the rate $\sim n^{-d^{-1}}$ obtained in \cite{biau2020wasserstein}
for the case of Wasserstein GAN, where the authors however do not
require H\"{o}lder regularity of the density and in addition provide approximations with neural networks.

Recently, better rates of convergence up to $\sim n^{-1}$ were suggested in \cite{belomnestry2021}. In fact, from the viewpoint of parametric statistics in finite dimensional spaces, where $\mu_{\hat\theta_n}$ estimates $\hat\mu_{\theta_0}$, a convergence $d_{JS}(\mu,\hat \mu_\theta)\sim n^{-\frac{1}{2}}$ is not optimal, as from maximum likelihood theory $\hat \theta_n-\theta_0\sim n^{-\frac{1}{2}}$ (see e.g.\ \cite{Ferguson.2017}) and thus $d_{JS}(\mu_{\theta_0},\mu_{\hat \theta_n})\sim n^{-1}$ is possible in this setting, as $\theta'\mapsto d_{JS}(\mu_{\theta_0},\mu_{theta'})$ will approximately behave quadratic around the minimum $\theta_0$. We therefore suggest that the rates obtained in this work can well be improved.   
\end{rem}

\section{Conclusion and Outlook}

\label{sec:CO} In this work, we have proven that generative adversarial
learning can be successfully pursued in the large data limit for generators
in $C^{k,\alpha}$-H\"{o}lder spaces if fulfilling a uniform bound.
The crucial observation was the realizability of arbitrary distributions
with $C^{k,\alpha}$-H\"{o}lder density, a suitable bound on the norm
within the so-defined hypothesis space and a consistent formulation
of the hypothesis space of discriminators. The key technical ingredients
were a thorough investigation of the analytical properties of the
Rosenblatt transformation based on an inverse function theorem for
H\"{o}lder spaces, which seems not to be known in the literature.
At the same time, H\"{o}lder spaces provide us with very flexible
possibilities for precompact embeddings, both qualitatively in $C^{k,\alpha'}$-spaces
and quantitatively in $L^{\infty}$ where explicit estimates on covering
numbers are known. Both kinds of embeddings were exploited in the
proofs of learnability with varying regularity requirements, without
and with explicit convergence rates.

From the insight generated through this work, some new research lines 
seem to be promising. While this paper focuses on the theory
of infinite dimensional generative learning, the understanding of
generative adversarial networks can profit from such an analysis.
In fact, deep (and shallow) neural networks (D(S)NN) on the one hand
have the universal approximation property which in recent times has
also been studied quantitatively \cite{yarotsky2017error,petersen2018optimal},
including deep convolutional neural networks \cite{yarotsky2018universal,petersen2020equivalence},
too. Thus, the approximation of the Rosenblatt transformation by DNN
seems to be feasible. DNN with smooth sigmoid activation functions
could provide a `conformal' approximation within $C^{k,\alpha}$.
Note that the rates of \cite{yarotsky2017error} are yet to be established
for this class of networks. ReLU Networks would correspond to an exterior,
`non-conformal' approximation, where the estimate of the sampling
error would require a revision in order to obtain convergence rates
that are independent of the network's weight count. While both strategies
seem feasible, a certain level of technicalities is to be expected
to achieve their implementation. Obviously, the `non-conformal' strategy
would be very valuable for the understanding of the success of the
contemporary GAN technology.

Secondly, the `triangular' structure of the Rosenblatt transformation
implies that the learning problem could be split into a hierarchy
of consecutive learning problems starting from one input and output
dimension and proceeding to $j$-dimensional input in the $j$-th
learning step, whereas always only one additional dimension of output
has to be learned at a time. Note that this does not necessarily have
to happen in a lexicographical order of input channels, but one could
easily combine this with linear transformations that consecutively
train multiscale hierarchies like wavelet coefficients of images.
Also this triangular structure is more easy to invert numerically,
which might be of interest in the construction of a CycleGAN \cite{Zhu.2018} (in moderate
dimension).

\appendix

\section{Analysis of Hölder Functions } \label{app:A}

In this section we collect some analytical results that are not directly connected to generative adversarial learning. In particular, we derive certain technical bounds for products and
quotients of H\"{o}lder differentiable functions, prove a version
of the inverse function theorem for H\"{o}lder differentiable maps, which might be of independent interest. 

\subsection{General properties of H\"{o}lder functions}

It is easy to see that $C^{k,\alpha}\left(\overline{U},\mathbb{R}^{d_{2}}\right)$
is a Banach space (see \cite[Sec. 4.1]{Gil01}); its members will
be referred to as \emph{$k$-times $\alpha$-H\"{o}lder differentiable}
functions on the set $\overline{U}$ (for a fixed $\alpha$, we shall
sometimes say H\"{o}lder differentiable rather than $\alpha$-H\"{o}lder
differentiable). Note that 
\begin{equation}
C^{k,\alpha_{2}}\left(\overline{U},\mathbb{R}^{d_{2}}\right)\subset C^{k,\alpha_{1}}\left(\overline{U},\mathbb{R}^{d_{2}}\right)\quad\left(k\geq0,\,0<\alpha_{1}<\alpha_{2}\leq1\right);\label{eq: Holder_inclusion}
\end{equation}
moreover, this embedding is compact, i.e., each bounded subset of
$C^{k,\alpha_{2}}\left(\overline{U},\mathbb{R}^{d_{2}}\right)$ is
relatively compact in $C^{k,\alpha_{1}}\left(\overline{U},\mathbb{R}^{d_{2}}\right)$
(see, e.g., \cite[Lemma 6.33]{Gil01}).

The special case $\alpha=1$ is of particular interest; the space
$C^{k,1}\left(\overline{U},\mathbb{R}^{d_{2}}\right)$ will be called
the \emph{Lipschitz space} and its members will be referred to as
\emph{$k$-times Lipschitz differentiable} functions on $\overline{U}$.
The mean value theorem for vector-valued functions (see, e.g., \cite[Th. 9.19]{rud76})
shows that $C^{1}\left(\overline{U},\mathbb{R}^{d_{2}}\right)\subset C^{0,1}\left(\overline{U},\mathbb{R}^{d_{2}}\right)$,
hence 
\begin{equation}
C^{k}\left(\overline{U},\mathbb{R}^{d_{2}}\right)\subset C^{k-1,1}\left(\overline{U},\mathbb{R}^{d_{2}}\right)\quad\left(k\geq1\right).\label{eq:cont_deriv_Lip}
\end{equation}
Moreover, the embedding \eqref{eq:cont_deriv_Lip} is continuous;
in fact, 
\[
\left\Vert f\right\Vert _{C^{k-1,1}\left(\overline{U},\mathbb{R}^{d_{2}}\right)}\leq2\left\Vert f\right\Vert _{C^{k}\left(\overline{U},\mathbb{R}^{d_{2}}\right)}\quad\left(f\in C^{k}\left(\overline{U},\mathbb{R}^{d_{2}}\right)\right).
\]

\subsection{Bounds for products and quotients of differentiable functions}
In this subsection we first compute $C^k$-bounds for products and quotients of differentiable functions in order to prepare the ground for the corresponding results for H\"{o}lder functions. 
\begin{prop}
\label{prop:ck_norm_prod}Let $U\subset\mathbb{R}^{d_{1}}$ be a bounded
open set, let $k$ be a nonnegative integer, and let $f_{i}\in C^{k}\left(\overline{U},\mathbb{R}\right)\;\,(1\leq i\leq m)$.
Then
\begin{equation}
\left\Vert f_{1}f_{2}\cdots f_{m}\right\Vert _{C^{k}\left(\overline{U},\mathbb{R}\right)}\leq m^{k}\left\Vert f_{1}\right\Vert _{C^{k}\left(\overline{U},\mathbb{R}\right)}\left\Vert f_{2}\right\Vert _{C^{k}\left(\overline{U},\mathbb{R}\right)}\cdots\left\Vert f_{m}\right\Vert _{C^{k}\left(\overline{U},\mathbb{R}\right)}.\label{eq:ck_norm_prod}
\end{equation}
\end{prop}

\begin{proof}
For simplicity of notation, let us write $C^{k}$ instead of $C^{k}\left(\overline{U},\mathbb{R}\right)$.
We proceed by induction on $k$. In the case $k=0$ the statement
is obvious. Now assume that the estimate is true for the norm $\left\Vert \cdot\right\Vert _{C^{k-1}}$
with $k\geq1$. To estimate the quantity $\left\Vert f_{1}f_{2}\cdots f_{m}\right\Vert _{C^{k}}$,
note that
\begin{equation}
\left\Vert h\right\Vert _{C^{k}}=\max\left\{ \left\Vert h\right\Vert _{C^{k-1}},\bigl\Vert h'_{x_{1}}\bigr\Vert_{C^{k-1}},\ldots,\bigl\Vert h'_{x_{d_{1}}}\bigr\Vert_{C^{k-1}}\right\} \quad\left(h\in C^{k}\right).\label{eq:Ck_norm_induct}
\end{equation}
For every $j=1,2,\ldots,d_{1}$ we have
\begin{align*}
\bigl\Vert(f_{1}f_{2}\cdots f_{m})'_{x_{j}}\bigr\Vert_{C^{k-1}} & =\biggl\Vert\sum_{i=1}^{m}\biggl(\tfrac{\partial f_{i}}{\partial x_{j}}\prod_{\substack{1\leq l\leq m\\
l\ne i
}
}f_{l}\biggr)\biggr\Vert_{C^{k-1}}\leq\sum_{i=1}^{m}\biggl\Vert\tfrac{\partial f_{i}}{\partial x_{j}}\prod_{\substack{1\leq l\leq m\\
l\ne i
}
}f_{l}\biggr\Vert_{C^{k-1}}\\
 & \leq m^{k-1}\sum_{i=1}^{m}\Bigl\Vert\tfrac{\partial f_{i}}{\partial x_{j}}\Bigr\Vert_{C^{k-1}}\prod_{\substack{1\leq l\leq m\\
l\ne i
}
}\left\Vert f_{l}\right\Vert _{C^{k-1}} \leq m\cdot m^{k-1}\prod_{l=1}^{m}\left\Vert f_{l}\right\Vert _{C^{k}},
\end{align*}
which, together with \eqref{eq:Ck_norm_induct}, yields the desired
estimate for $\left\Vert f_{1}f_{2}\cdots f_{m}\right\Vert _{C^{k}}$.
\end{proof}
\begin{lem}
\label{lem:ck_norm_recip}Let $U\subset\mathbb{R}^{d_{1}}$ be a bounded
open set, let $k$ be a nonnegative integer, and let $v\in C^{k}\left(\overline{U},\mathbb{R}\right)$
with $\inf\limits _{x\in U}\left|v(x)\right|>0$. Then \begingroup\abovedisplayskip=0.5ex
\[
\left\Vert \tfrac{1}{v}\right\Vert _{C^{k}}\leq4^{2^{k}-k-1}\Vert v\Vert_{C^{k}}^{2^{k}-1}\Bigl[\inf\limits _{x\in U}\left|v(x)\right|\Bigr]^{-2^{k}}.
\]
\endgroup
\end{lem}

\begin{proof}
The case $k=0$ is trivial. Let $k\geq1$. Applying \eqref{eq:Ck_norm_induct}
to $h=\tfrac{1}{v}$ gives
\[
\left\Vert \tfrac{1}{v}\right\Vert _{C^{m}}=\max\left\{ \left\Vert \tfrac{1}{v}\right\Vert _{C^{m-1}},\bigl\Vert v'_{x_{1}}\cdot\tfrac{1}{v^{2}}\bigr\Vert_{C^{m-1}},\ldots,\bigl\Vert v'_{x_{d_{1}}}\cdot\tfrac{1}{v^{2}}\bigr\Vert_{C^{m-1}}\right\} \;\left(1\leq m\leq k\right).
\]
In view of Proposition \ref{prop:ck_norm_prod},
\[
\bigl\Vert v'_{x_{j}}\cdot\tfrac{1}{v^{2}}\bigr\Vert_{C^{m-1}}\leq2^{m-1}\bigl\Vert v'_{x_{j}}\bigr\Vert_{C^{m-1}}\left\Vert \tfrac{1}{v^{2}}\right\Vert _{C^{m-1}}\leq4^{m-1}\Vert v\Vert_{C^{m}}\left\Vert \tfrac{1}{v}\right\Vert _{C^{m-1}}^{2}\;\,\left(j\leq d_{1}\right),
\]
which, together with the previous equality, yields
\[
\left\Vert \tfrac{1}{v}\right\Vert _{C^{m}}\leq\left\Vert \tfrac{1}{v}\right\Vert _{C^{m-1}}\max\left\{ 1,4^{m-1}\Vert v\Vert_{C^{m}}\left\Vert \tfrac{1}{v}\right\Vert _{C^{m-1}}\right\} .
\]
Proposition \ref{prop:ck_norm_prod} shows that
\[
1=\left\Vert v\cdot\tfrac{1}{v}\right\Vert _{C^{m-1}}\leq2^{m-1}\Vert v\Vert_{C^{m-1}}\left\Vert \tfrac{1}{v}\right\Vert _{C^{m-1}}\leq4^{m-1}\Vert v\Vert_{C^{m}}\left\Vert \tfrac{1}{v}\right\Vert _{C^{m-1}},
\]
hence the estimate for $\left\Vert \tfrac{1}{v}\right\Vert _{C^{m}}$
reduces to
\[
\left\Vert \tfrac{1}{v}\right\Vert _{C^{m}}\leq4^{m-1}\Vert v\Vert_{C^{m}}\left\Vert \tfrac{1}{v}\right\Vert _{C^{m-1}}^{2}\quad\left(1\leq m\leq k\right).
\]
Using this inequality and applying induction on $m$, we can easily
prove that
\begin{equation}
\left\Vert \tfrac{1}{v}\right\Vert _{C^{m}}\leq\Bigl[\inf\limits _{x\in U}\left|v(x)\right|\Bigr]^{-2^{m}}\prod_{j=1}^{m}\bigl(4^{j-1}\Vert v\Vert_{C^{j}}\bigr)^{2^{m-j}}\quad\left(1\leq m\leq k\right).\label{eq:Ck_norm_ratio_3}
\end{equation}
\eqref{eq:Ck_norm_ratio_3} implies 
\[
\left\Vert \tfrac{1}{v}\right\Vert _{C^{k}}\leq4^{k\sum\limits _{i=0}^{k-1}2^{i}-\sum\limits _{i=0}^{k-1}(i+1)2^{i}}\Vert v\Vert_{C^{k}}^{\sum\limits _{i=0}^{k-1}2^{i}}\Bigl[\inf\limits _{x\in U}\left|v(x)\right|\Bigr]^{-2^{k}}
\]
which, together with the equalities $\sum_{i=0}^{k-1}2^{i}=2^{k}-1$
and $\sum_{i=0}^{k-1}(i+1)2^{i}=(k-1)2^{k}+1$, yield the desired
estimate.
\end{proof}
The previous two results now easily combine to an estimate of the $C^k$-norm for quotients of $C^k$ functions.

\begin{cor}
\label{cor:ck_norm_frac}Let $U\subset\mathbb{R}^{d_{1}}$ be a bounded
open set, let $k$ be a non negative integer, and let $f,g\in C^{k}\left(\overline{U},\mathbb{R}\right)$
with $\inf\limits _{x\in U}\left|g(x)\right|>0$. Then
\[
\left\Vert \tfrac{f}{g}\right\Vert _{C^{k}}\leq2^{2^{k+1}-k-2}\Bigl[\inf\limits _{x\in U}\left|g(x)\right|\Bigr]^{-2^{k}}\left\Vert f\right\Vert _{C^{k}}\Vert g\Vert_{C^{k}}^{2^{k}-1}.
\]
\end{cor}

\begin{proof}
The proof follows from Proposition \ref{prop:ck_norm_prod} and Lemma
\ref{lem:ck_norm_recip}.
\end{proof}
\subsection{H\"{o}lder norm bounds for products and quotients of H\"{o}lder differentiable
functions}

Let $U\subset\mathbb{R}^{d_{1}}$ be a bounded open set, let $0<\alpha\leq1$,
and let $k$ be a nonnegative integer. For $f\in C^{k,\alpha}\left(\overline{U},\mathbb{R}^{d_{2}}\right)$
put
\[
[f]_{k,\alpha}\coloneqq\left\Vert f\right\Vert _{C^{k,\alpha}}-\left\Vert f\right\Vert _{C^{k}}=\max_{\left|\beta\right|=k}\sup_{\substack{x,y\in U\\
x\ne y
}
}\frac{\left|D_{\beta}f(x)-D_{\beta}f(y)\right|}{\left|x-y\right|^{\alpha}}.
\]

\begin{lem}
\label{lem:hoel_brac_prod}Let $U\subset\mathbb{R}^{d_{1}}$ be a
bounded open set, let $k$ be a nonnegative integer, let $0<\alpha\leq1$,
and let $f_{i}\in C^{k,\alpha}\left(\overline{U},\mathbb{R}\right)\;\,(1\leq i\leq m)$.
Then $f_{1}f_{2}\cdots f_{m}\in C^{k,\alpha}\left(\overline{U},\mathbb{R}\right)$
and
\[
[f_{1}f_{2}\cdots f_{m}]_{k,\alpha}\leq m^{k+1}\max_{1\leq i\leq m}\left\Vert f_{i}\right\Vert _{C^{k}}^{m-1}\cdot\max_{\substack{1\leq i\leq m\\
0\leq j\leq k
}
}[f_{i}]_{j,\alpha}\,.
\]
\end{lem}

\begin{proof}
In the case $k=0$ we use \eqref{eq:prod_change} to obtain
\begin{align*}
[f_{1}f_{2}\cdots f_{m}]_{0,\alpha} & \leq m\max_{1\leq i\leq m}\left\Vert f_{i}\right\Vert _{C}^{m-1}\cdot\max_{1\leq i\leq m}\sup_{\substack{x,y\in U\\
x\ne y
}
}\frac{\left|f_{i}(x)-f_{i}(y)\right|}{\left|x-y\right|^{\alpha}}\\
 & =m\max_{1\leq i\leq m}\left\Vert f_{i}\right\Vert _{C}^{m-1}\cdot\max_{1\leq i\leq m}[f_{i}]_{0,\alpha}\,.
\end{align*}
Now suppose the statement is true for $C^{k-1,\alpha}\left(\overline{U},\mathbb{R}\right)$.
It is clear that
\[
[f_{1}\cdots f_{m}]_{k,\alpha}=\max_{1\leq j\leq d_{1}}\max_{\left|\beta\right|=k-1}\sup_{\substack{x,y\in U\\
x\ne y
}
}\frac{\left|\bigl(D_{\beta}\tfrac{\partial}{\partial x_{j}}\bigr)(f_{1}\cdots f_{m})(x)-\bigl(D_{\beta}\tfrac{\partial}{\partial x_{j}}\bigr)(f_{1}\cdots f_{m})(y)\right|}{\left|x-y\right|^{\alpha}}.
\]
Assume that the first maximum in the right hand side is attained at
$j=j_{0}$. Since
\begin{multline*}
\left|\bigl(D_{\beta}\tfrac{\partial}{\partial x_{j_{0}}}\bigr)(f_{1}\cdots f_{m})(x)-\bigl(D_{\beta}\tfrac{\partial}{\partial x_{j_{0}}}\bigr)(f_{1}\cdots f_{m})(y)\right|\\
\leq\sum_{i=1}^{m}\biggl|D_{\beta}\biggl(\tfrac{\partial f_{i}}{\partial x_{j_{0}}}\prod_{\substack{1\leq l\leq m\\
l\ne i
}
}f_{l}\biggr)(x)-D_{\beta}\biggl(\tfrac{\partial f_{i}}{\partial x_{j_{0}}}\prod_{\substack{1\leq l\leq m\\
l\ne i
}
}f_{l}\biggr)(y)\biggr|,
\end{multline*}
we obtain
\[
[f_{1}f_{2}\cdots f_{m}]_{k,\alpha}\leq\sum_{i=1}^{m}\biggl[\tfrac{\partial f_{i}}{\partial x_{j_{0}}}\prod_{\substack{1\leq l\leq m\\
l\ne i
}
}f_{l}\biggr]_{k-1,\alpha}.
\]
The induction hypothesis shows that
\begin{align*}
\biggl[\tfrac{\partial f_{i}}{\partial x_{j_{0}}}\prod_{\substack{1\leq l\leq m\\
l\ne i
}
}f_{l}\biggr]_{k-1,\alpha}\leq & \,m^{k}\max\bigl\{\bigl\Vert\tfrac{\partial f_{i}}{\partial x_{j_{0}}}\bigr\Vert_{C^{k-1}}^{m-1},\max_{\substack{0\leq l\leq m\\
l\ne i
}
}\left\Vert f_{l}\right\Vert _{C^{k-1}}^{m-1}\bigr\}\cdot\max_{0\leq j\leq k-1}\bigl\{\bigl[\tfrac{\partial f_{i}}{\partial x_{j_{0}}}\bigr]_{j,\alpha},\max_{\substack{0\leq l\leq m\\
l\ne i
}
}[f_{l}]_{j,\alpha}\bigr\}\\
\leq & \,m^{k}\max_{0\leq l\leq m}\left\Vert f_{l}\right\Vert _{C^{k}}^{m-1}\max_{\substack{0\leq l\leq m\\
0\leq j\leq k
}
}[f_{i}]_{j,\alpha}\,,
\end{align*}
for every $j=1,2,\ldots,m$; which immediately implies the desired
estimate.
\end{proof}
\begin{prop}
\label{prop:hoel_norm_prod}Let $U\subset\mathbb{R}^{d_{1}}$ be a
bounded open set, let $k$ be a nonnegative integer, let $0<\alpha\leq1$,
and let $f_{i}\in C^{k,\alpha}\left(\overline{U},\mathbb{R}\right)\;\,(1\leq i\leq m)$.
Then $f_{1}f_{2}\cdots f_{m}\in C^{k,\alpha}\left(\overline{U},\mathbb{R}\right)$
and
\[
\left\Vert f_{1}f_{2}\cdots f_{m}\right\Vert _{C^{k,\alpha}}\leq2m^{k+1}\max\bigl\{1,[\operatorname{diam}(U)]^{1-\alpha}\bigr\}\max_{1\leq i\leq m}\left\Vert f_{i}\right\Vert _{C^{k}}^{m-1}\max_{1\leq i\leq m}\left\Vert f_{i}\right\Vert _{C^{k,\alpha}}.
\]
\end{prop}

\begin{proof}
The mean value theorem for vector-valued functions shows that
\begin{equation}
[f]_{j,\alpha}\leq[\operatorname{diam}(U)]^{1-\alpha}\left\Vert f\right\Vert _{C^{j+1}}\quad\left(f\in C^{j}\left(\overline{U},\mathbb{R}\right),0\leq j\leq k-1\right),\label{eq:Hoeld_bracket_Ck_estim1}
\end{equation}
hence
\begin{equation}
[f]_{j,\alpha}\leq\max\bigl\{1,[\operatorname{diam}(U)]^{1-\alpha}\bigr\}\left\Vert f\right\Vert _{C^{k,\alpha}}\;\,\left(f\in C^{k,\alpha}\left(\overline{U},\mathbb{R}\right),0\leq j\leq k\right),\label{eq:Hoeld_bracket_Ck_estim2}
\end{equation}
which, together with Lemma \ref{lem:hoel_brac_prod}, implies
\[
[f_{1}f_{2}\cdots f_{m}]_{k,\alpha}\leq m^{k+1}\max\bigl\{1,[\operatorname{diam}(U)]^{1-\alpha}\bigr\}\max_{1\leq i\leq m}\left\Vert f_{i}\right\Vert _{C^{k}}^{m-1}\max_{1\leq i\leq m}\left\Vert f_{i}\right\Vert _{C^{k,\alpha}}.
\]
The latter, together with Proposition \ref{prop:ck_norm_prod} and
with the equality
\begin{equation}
\left\Vert f\right\Vert _{C^{k,\alpha}\left(\overline{U},\mathbb{R}^{d_{2}}\right)}\coloneqq\left\Vert f\right\Vert _{C^{k}\left(\overline{U},\mathbb{R}^{d_{2}}\right)}+[f]_{k,\alpha}\quad\left(f\in C^{k,\alpha}\left(\overline{U},\mathbb{R}^{d_{2}}\right)\right),\label{eq:Hoel_norm_decomp}
\end{equation}
implies the required estimate.
\end{proof}
Using Proposition \ref{prop:hoel_norm_prod} and the definition of
the determinant, we obtain the following:
\begin{cor}
\label{cor:hoeld_norm_determinant}Let $U\subset\mathbb{R}^{d_{1}}$
be a bounded open set, let $k$ be a nonnegative integer, let $0<\alpha\leq1$,
and let $A=\left(a_{ij}\right)_{d_{2}\times d_{2}}\in C^{k,\alpha}\left(\overline{U},M_{d_{2}d_{2}}\left(\mathbb{R}\right)\right)$.
Then $\det(A)\in C^{k,\alpha}\left(\overline{U},\mathbb{R}\right)$
and
\begin{align*}
\Vert\det(A)\Vert_{C^{k,\alpha}}\leq & 2d_{2}!d_{2}^{k+1}\max\bigl\{1,[\operatorname{diam}(U)]^{1-\alpha}\bigr\}\max_{1\leq i,j\leq d_{2}}\left\Vert a_{ij}\right\Vert _{C^{k}}^{d_{2}-1}\max_{1\leq i,j\leq d_{2}}\left\Vert a_{ij}\right\Vert _{C^{k,\alpha}}.
\end{align*}
\end{cor}

\begin{lem}
\label{lem:hoel_norm_reciprocal}Let $U\subset\mathbb{R}^{d_{1}}$
be a bounded open set, let $k$ be a nonnegative integer, let $0<\alpha\leq1$,
and let $v\in C^{k,\alpha}\left(\overline{U},\mathbb{R}\right)$ with
$\inf\limits _{x\in U}\left|v(x)\right|>0$. Then $\tfrac{1}{v}\in C^{k,\alpha}\left(\overline{U},\mathbb{R}\right)$
and
\begin{align*}
\left\Vert \tfrac{1}{v}\right\Vert _{C^{k,\alpha}}\leq \,2^{k+1}\max\bigl\{1,[\operatorname{diam}(U)]^{1-\alpha}\bigr\}\max\bigl\{\left\Vert v\right\Vert _{C^{k}},2^{k}\left\Vert \tfrac{1}{v}\right\Vert _{C^{k}}^{2}\bigr\}\max\bigl\{\left\Vert v\right\Vert _{C^{k,\alpha}};2^{k}\left\Vert \tfrac{1}{v}\right\Vert _{C^{k}}^{2}\bigr\}.
\end{align*}
\end{lem}

\begin{proof}
We first prove that
\begin{equation}
\label{eq:hoel_brac_reciprocal_1}
\bigl[\tfrac{1}{v}\bigr]_{k,\alpha}\leq 2^{k}\max\bigl\{1,[\operatorname{diam}(U)]^{1-\alpha}\bigr\}\max\bigl\{\left\Vert v\right\Vert _{C^{k}},\left\Vert \tfrac{1}{v^{2}}\right\Vert _{C^{k}}\bigr\}\max\bigl\{\left\Vert v\right\Vert _{C^{k,\alpha}};\left\Vert \tfrac{1}{v^{2}}\right\Vert _{C^{k}}\bigr\}.
\end{equation}
In the case $k=0$, \eqref{eq:hoel_brac_reciprocal_1} follows from
the inequality $\bigl[\tfrac{1}{v}\bigr]_{0,\alpha}\leq\bigl\Vert\tfrac{1}{v^{2}}\bigr\Vert_{C}\cdot[v]_{0,\alpha}$.
Assuming $k\geq1$, the reasoning used in the proof of Lemma \ref{lem:hoel_brac_prod}
can be applied:
\begin{align*}
\bigl[\tfrac{1}{v}\bigr]_{k,\alpha} & =\max_{1\leq j\leq d_{1}}\max_{\left|\beta\right|=k-1}\sup_{\substack{x,y\in U\\
x\ne y
}
}\frac{\left|\bigl(D_{\beta}\tfrac{\partial}{\partial x_{j}}\bigr)\bigl(\tfrac{1}{v}\bigr)(x)-\bigl(D_{\beta}\tfrac{\partial}{\partial x_{j}}\bigr)\bigl(\tfrac{1}{v}\bigr)(y)\right|}{\left|x-y\right|^{\alpha}}\\
 & =\max_{1\leq j\leq d_{1}}\max_{\left|\beta\right|=k-1}\sup_{\substack{x,y\in U\\
x\ne y
}
}\frac{\left|D_{\beta}\bigl(v'_{x_{j}}\cdot\tfrac{1}{v^{2}}\bigr)(x)-D_{\beta}\bigl(v'_{x_{j}}\cdot\tfrac{1}{v^{2}}\bigr)(y)\right|}{\left|x-y\right|^{\alpha}}\\
 & =\max_{1\leq j\leq d_{1}}\bigl[v'_{x_{j}}\cdot\tfrac{1}{v^{2}}\bigr]_{k-1,\alpha}.
\end{align*}
Applying Lemma \ref{lem:hoel_brac_prod} to $\bigl[v'_{x_{j}}\cdot\tfrac{1}{v^{2}}\bigr]_{k-1,\alpha}$
yields
\begin{align*}
\bigl[\tfrac{1}{v}\bigr]_{k,\alpha}\leq & \,2^{k}\max_{1\leq j\leq d_{1}}\left[\max\bigl\{\bigl\Vert v'_{x_{j}}\bigr\Vert_{C^{k-1}},\left\Vert \tfrac{1}{v^{2}}\right\Vert _{C^{k-1}}\bigr\}\right.\\
 & \left.\;\,\cdot\max\bigl\{[v'_{x_{j}}]_{0,\alpha};\ldots;[v'_{x_{j}}]_{k-1,\alpha};[\tfrac{1}{v^{2}}]_{0,\alpha};\ldots;[\tfrac{1}{v^{2}}]_{k-1,\alpha}\bigr\}\right]\\
\leq & \,2^{k}\max\bigl\{\left\Vert v\right\Vert _{C^{k}},\left\Vert \tfrac{1}{v^{2}}\right\Vert _{C^{k-1}}\bigr\}\\
 & \;\,\cdot\max\bigl\{[v]_{1,\alpha};\ldots;[v]_{k,\alpha};[\tfrac{1}{v^{2}}]_{0,\alpha};\ldots;[\tfrac{1}{v^{2}}]_{k-1,\alpha}\bigr\}.
\end{align*}
The latter, combined with \eqref{eq:Hoeld_bracket_Ck_estim1} and
\eqref{eq:Hoeld_bracket_Ck_estim2}, implies \eqref{eq:hoel_brac_reciprocal_1}.

In view of \eqref{eq:ck_norm_prod}, $\left\Vert \tfrac{1}{v^{2}}\right\Vert _{C^{k}}\leq2^{k}\left\Vert \tfrac{1}{v}\right\Vert _{C^{k}}^{2}$,
which, together with \eqref{eq:hoel_brac_reciprocal_1}, gives
\begin{align}
\bigl[\tfrac{1}{v}\bigr]_{k,\alpha}\leq\, & 2^{k}\max\bigl\{1,[\operatorname{diam}(U)]^{1-\alpha}\bigr\}\max\bigl\{\left\Vert v\right\Vert _{C^{k}},2^{k}\left\Vert \tfrac{1}{v}\right\Vert _{C^{k}}^{2}\bigr\}\nonumber \\
 & \,\cdot\max\bigl\{\left\Vert v\right\Vert _{C^{k,\alpha}};2^{k}\left\Vert \tfrac{1}{v}\right\Vert _{C^{k}}^{2}\bigr\}.\label{eq:hoel_brac_reciprocal_2}
\end{align}
Next, by Proposition \ref{prop:ck_norm_prod},
\[
\left\Vert \tfrac{1}{v}\right\Vert _{C^{k}}=\left\Vert v\cdot\tfrac{1}{v^{2}}\right\Vert _{C^{k}}\leq2^{k}\left\Vert v\right\Vert _{C^{k}}\cdot\left\Vert \tfrac{1}{v^{2}}\right\Vert _{C^{k}}\leq2^{k}\left\Vert v\right\Vert _{C^{k}}\cdot2^{k}\left\Vert \tfrac{1}{v}\right\Vert _{C^{k}}^{2},
\]
hence $\left\Vert \tfrac{1}{v}\right\Vert _{C^{k}}$ does not exceed
the right hand side of \eqref{eq:hoel_brac_reciprocal_2}. This fact,
together with \eqref{eq:Hoel_norm_decomp} and \eqref{eq:hoel_brac_reciprocal_2},
implies the desired estimate for $\left\Vert \tfrac{1}{v}\right\Vert _{C^{k,\alpha}}$.
\end{proof}
Proposition \ref{prop:hoel_norm_prod} and Lemma \ref{lem:hoel_norm_reciprocal}
immediately imply the following
\begin{cor}
\label{cor:Hoeld_cont_frac}Let $u_{n}\to u$ and $v_{n}\to v$ in
$C^{k,\alpha}\left(\overline{U},\mathbb{R}\right)$-norm, and let
$\inf\limits _{x\in U,n\in\mathbb{N}}\left|v_{n}(x)\right|>0$. Then
$\frac{u_{n}}{v_{n}}\to\frac{u}{v}$ in $C^{k,\alpha}\left(\overline{U},\mathbb{R}\right)$-norm.
\end{cor}

Lemmas \ref{lem:ck_norm_recip}, \ref{lem:hoel_norm_reciprocal} and
Proposition \ref{prop:hoel_norm_prod} yield the following estimate
for the $C^{k,\alpha}$-norm of the quotient of two $C^{k,\alpha}$-functions:
\begin{prop}
\label{prop:Hoel_norm_frac}Let $U\subset\mathbb{R}^{d_{1}}$ be a
bounded open set, let $k$ be a nonnegative integer, let $0<\alpha\leq1$,
and let functions $f,g\in C^{k,\alpha}\left(\overline{U},\mathbb{R}\right)$
satisfy the conditions $\left\Vert f\right\Vert _{C^{k,\alpha}}\leq L,\left\Vert g\right\Vert _{C^{k,\alpha}}\leq L$
and $\inf\limits _{x\in U}\left|g(x)\right|\geq l^{-1}$ with real
numbers $L\geq1$ and $l\geq1$. Then \begingroup\abovedisplayskip=0.5ex
\[
\left\Vert \tfrac{f}{g}\right\Vert _{C^{k,\alpha}}\leq(2Ll)^{2^{k+5}}\max\bigl\{1,[\operatorname{diam}(U)]^{2(1-\alpha)}\bigr\}.
\]
\endgroup
\end{prop}

\begin{proof}
In view of Lemma \ref{lem:ck_norm_recip},
\[
\left\Vert \tfrac{1}{g}\right\Vert _{C^{k}}\leq2^{2^{k+1}-2k-2}\Bigl[\inf\limits _{x\in U}\left|g(x)\right|\Bigr]^{-2^{k}}\Vert g\Vert_{C^{k,\alpha}}^{2^{k}-1}\leq2^{2^{k+1}-2k}(Ll)^{2^{k}},
\]
hence
\[
\max\left\{ \left\Vert f\right\Vert _{C^{k}},\left\Vert \tfrac{1}{g}\right\Vert _{C^{k}}\right\} \leq2^{2^{k+1}}(Ll)^{2^{k}}\quad\text{and}\quad2^{k}\left\Vert \tfrac{1}{g}\right\Vert _{C^{k}}^{2}\leq2^{2^{k+2}}(Ll)^{2^{k+1}}.
\]
Therefore Lemma \ref{lem:hoel_norm_reciprocal} gives
\[
\left\Vert \tfrac{1}{g}\right\Vert _{C^{k,\alpha}}\leq\max\bigl\{1,[\operatorname{diam}(U)]^{1-\alpha}\bigr\}\cdot2^{2^{k+3}+k+1}(Ll)^{2^{k+2}}.
\]
The latter, together with Proposition \ref{prop:hoel_norm_prod},
yields
\begin{align*}
\left\Vert \tfrac{f}{g}\right\Vert _{C^{k,\alpha}} & \leq2^{k+2}\max\bigl\{1,[\operatorname{diam}(U)]^{2(1-\alpha)}\bigr\}\cdot2^{2^{k+1}}(Ll)^{2^{k}}\cdot2^{2^{k+3}+k+1}(Ll)^{2^{k+2}}\\
 & \leq(2Ll)^{2^{k+5}}\max\bigl\{1,[\operatorname{diam}(U)]^{2(1-\alpha)}\bigr\}.\tag*{\qedsymbol}
\end{align*}
\def\qedsymbol{}
\end{proof}
\subsection{Composition of differentiable functions}
Let $M_{d_{2}d_{3}}\left(\mathbb{R}\right)$ stand for the set of
all $d_{2}\times d_{3}$ matrices with real entries. The collection
of all matrix-valued functions $\left(a_{ij}\right)_{d_{2}\times d_{3}}:\overline{U}\to M_{d_{2}d_{3}}\left(\mathbb{R}\right)$,
whose entries are $k$-times H\"{o}lder differentiable on $\overline{U}$,
will be denoted by $C^{k,\alpha}\left(\overline{U},M_{d_{2}d_{3}}\left(\mathbb{R}\right)\right)$.
In the special case $d_{2}=1$ it is easily seen that the product
of H\"{o}lder continuous real-valued functions on $\overline{U}$
is again H\"{o}lder continuous (see, e.g., \cite[Sec. 4.1]{Gil01}),
hence the product of $k$-times H\"{o}lder differentiable real-valued
functions on $\overline{U}$ is again $k$-times H\"{o}lder differentiable
there. Moreover, the multiplication is continuous in $C^{k,\alpha}\left(\overline{U},\mathbb{R}\right)$-norm
(see Proposition \ref{prop:hoel_norm_prod}). Obviously, these assertions
remain true for matrix-valued functions. Moreover, the quotient of $k$-times
H\"{o}lder differentiable real-valued functions is also $k$-times
H\"{o}lder differentiable provided that the absolute value of the
denominator has a positive lower bound (see Lemma \ref{lem:hoel_norm_reciprocal}).

More interesting is the fact that superpositions of $k$-times H\"{o}lder
differentiable functions on $\overline{U}$ are $k$-times H\"{o}lder
differentiable:
\begin{lem}
\label{lem:Hoelder_composition}Let $U_{1}\subset\mathbb{R}^{d_{1}}$
and $U_{2}\subset\mathbb{R}^{d_{2}}$ be bounded open sets, let $k$
be a positive integer, and let $0<\alpha\leq1$.
\begin{enumerate}
\item[(a)] If $f\in C^{k,\alpha}\left(\overline{U}_{1},\mathbb{R}^{d_{2}}\right),f:U_{1}\to U_{2}$
and $g\in C^{k,\alpha}\left(\overline{U}_{2},\mathbb{R}^{d_{3}}\right)$,
then $g\circ f\in C^{k,\alpha}\left(\overline{U}_{1},\mathbb{R}^{d_{3}}\right)$.
\item[(b)] Let $f_{n}\in C^{k,\alpha}\left(\overline{U}_{1},\mathbb{R}^{d_{2}}\right),f_{n}:U_{1}\to U_{2},g_{n}\in C^{k,\alpha}\left(\overline{U}_{2},\mathbb{R}^{d_{3}}\right)\;\,\left(n\in\mathbb{N}\right)$,
and let $f_{n}\to f,\linebreak g_{n}\to g$ in $C^{k,\alpha}$-norm with $f$
satisfying the condition $f\left(U_{1}\right)\subset U_{2}$. Then
$g_{n}\circ f_{n}\to g\circ f$ in $C^{k,\alpha}$-norm.
\end{enumerate}
\end{lem}

The proof of Lemma \ref{lem:Hoelder_composition} is quite simple,
hence we will only outline it. First note that if $f\in C^{0,\alpha_{1}}\left(\overline{U}_{1},\mathbb{R}^{d_{2}}\right)$
and $g\in C^{0,\alpha_{2}}\left(\overline{U}_{2},\mathbb{R}^{d_{3}}\right)$
for some $\alpha_{1},\alpha_{2}\in\left(0,1\right]$, then ${g\circ f}\in C^{0,\alpha_{1}\alpha_{2}}\left(\overline{U}_{1},\mathbb{R}^{d_{3}}\right)$.
Next, we proceed inductively, using the equality $D^{m}\left(g\circ f\right)=D^{m-1}\left[\left(Dg\circ f\right)Df\right]\;\:\left(1\leq m\leq k\right)$.

In the considerations below we shall assume that $d_{1}=d_{2}=d$
and, for simplicity, we will write $C^{k,\alpha}\left(\overline{U}\right)$
instead of $C^{k,\alpha}\left(\overline{U},\mathbb{R}^{d_{2}}\right)$.

\subsection{The inverse function theorem for H\"{o}lder differentiable maps}
We are now in the position to prove the main theorem of this appendix, namely the inverse function theorem for H\"{o}lder differentiable functions on $\bar{U}\subset\mathbb{R}^d$. Note that this is different from the the inverse function theorem for Fr\'{e}chet differentiable functionals \emph{on} the Banach space of H\"{o}lder functions, which is a consequence of the Banach space version of the standard inverse function theorem.   

\begin{thm}
\label{Th: hoeld_diffeomor} Let $U\subset\mathbb{R}^{d}$ be a bounded
open set, let $k$ be a positive integer, and let ${0<\alpha\leq1}$.
\begin{enumerate}
\item[(a)]  If $\varphi:\overline{U}\to\overline{U}$ is a bijective and $k$-times
$\alpha$-H\"{o}lder differentiable function on $\overline{U}$ such
that the Jacobian determinant $J_{\varphi}$ satisfies the condition
\begin{equation}
\inf_{x\in U}\left|J_{\varphi}(x)\right|>0,\label{eq:Jac_posit}
\end{equation}
then $\varphi^{-1}\in C^{k,\alpha}\left(\overline{U}\right)$ and
\begin{equation}
\left\Vert D\left(\varphi^{-1}\right)\right\Vert _{\infty}\leq\tfrac{d!\left\Vert D\varphi\right\Vert _{\infty}^{d-1}}{\inf\limits _{x\in U}\left|J_{\varphi}(x)\right|}.\label{eq:estim_deriv_inv_func}
\end{equation}
\item[(b)]  If $\varphi_{n}:\overline{U}\to\overline{U}\;\,\left(n=1,2,\ldots\right)$
are bijective and $k$-times $\alpha$-H\"{o}lder differentiable functions
on $\overline{U}$ such that
\[
\inf_{x\in U,n\in\mathbb{N}}\left|J_{\varphi_{n}}(x)\right|>0
\]
and $\varphi_{n}\to\varphi$ in $C^{k,\alpha}$-norm, then $\varphi:\overline{U}\to\overline{U}$
is bijective, $\varphi^{-1}\in C^{k,\alpha}\left(\overline{U}\right)$
and $\varphi_{n}^{-1}\to\varphi^{-1}$ in $C^{k,\alpha}$-norm.
\end{enumerate}
\end{thm}

\begin{proof}
(a) We first show that $\left(D\varphi\right)^{-1}$ is $\left(k-1\right)$-times
H\"{o}lder differentiable on $\overline{U}$. Identifying $D\varphi$
with the Jacobian matrix of $\varphi$, we may write $\left(D\varphi\right)^{-1}$
in the form $\left(D\varphi\right)^{-1}=\frac{1}{J_{\varphi}}\operatorname{adj}\left(D\varphi\right)$,
where $\operatorname{adj}\left(D\varphi\right)$ is the adjugate matrix
of the Jacobian matrix of $\varphi$. The entries of $\operatorname{adj}\left(D\varphi\right)$
are minors of $D\varphi$ with corresponding sign factors. Since the
product of $\left(k-1\right)$-times H\"{o}lder differentiable functions
on $\overline{U}$ is $\left(k-1\right)$-times H\"{o}lder differentiable,
hence so is the matrix-valued function $\operatorname{adj}\left(D\varphi\right)$
(see Proposition \ref{prop:hoel_norm_prod}). In view of \eqref{eq:Jac_posit}
and Lemma \ref{lem:hoel_norm_reciprocal}, we conclude that $\left(D\varphi\right)^{-1}$
is $\left(k-1\right)$-times H\"{o}lder differentiable on $\overline{U}$.

Since $\overline{U}$ is compact and $\varphi:\overline{U}\to\overline{U}$
is a continuous bijection, $\varphi^{-1}$ is continuous on $\overline{U}$
(see, e.g., \cite[Lemma I.5.8]{Dunf88vol1}). The inverse function
theorem for $C^{1}$-maps (see, e.g., \cite[Th. 9.24]{rud76} or
\cite[Th. 3.1]{tay11}) implies that $\varphi^{-1}\in C^{1}\left(U\right)$.
If we differentiate both sides of the relation $\varphi\circ\varphi^{-1}=I$
and apply the chain rule, we obtain 
\begin{equation}
D\varphi^{-1}=\left(D\varphi\right)^{-1}\circ\varphi^{-1}.\label{eq:inv_deriv}
\end{equation}
Therefore $D\varphi^{-1}$ is continuous on $\overline{U}$, i.e.,
$\varphi^{-1}\in C^{1}\left(\overline{U}\right).$ Since $\left(D\varphi\right)^{-1}$
is $\left(k-1\right)$-times H\"{o}lder differentiable on $\overline{U}$
and $\varphi^{-1}$ is continuously differentiable on $\overline{U}$,
another application of \eqref{eq:inv_deriv} gives $\varphi^{-1}\in C^{2}\left(\overline{U}\right)$,
etc. Thus, we arrive in a finite number of steps at the relation $\varphi^{-1}\in C^{k}\left(\overline{U}\right)$.

In view of \eqref{eq: Holder_inclusion} and \eqref{eq:cont_deriv_Lip},
we have $\varphi^{-1}\in C^{k-1,\alpha}\left(\overline{U}\right)$.
Thus, $\left(D\varphi\right)^{-1}$ and $\varphi^{-1}$ are both $\left(k-1\right)$-times
H\"{o}lder differentiable on $\overline{U}$, hence their composition
also possesses that property, according to Lemma \ref{lem:Hoelder_composition};
therefore \eqref{eq:inv_deriv} implies that $\varphi^{-1}\in C^{k,\alpha}\left(\overline{U}\right)$.

To prove \eqref{eq:estim_deriv_inv_func}, we need an estimate for
the operator norm of an invertible matrix. If $A=\left(a_{ij}\right)_{d\times d}\in M_{dd}\left(\mathbb{R}\right)$
is invertible, then $A^{-1}=\frac{1}{\det\left(A\right)}\operatorname{adj}\left(A\right)$.
The operator norm and the determinant of any matrix $B=\left(b_{ij}\right)_{d\times d}\in M_{dd}\left(\mathbb{R}\right)$
can be easily estimated as 
\begin{equation}
\left|B\right|\leq d\max_{1\leq i,j\leq d}\left|b_{ij}\right|\label{eq:mat_norm_bound}
\end{equation}
and 
\begin{equation}
\left|\det\left(B\right)\right|\leq d!\left[\max_{1\leq i,j\leq d}\left|b_{ij}\right|\right]^{d}\negthinspace,\label{eq:det_up_bound}
\end{equation}
respectively. Therefore the absolute value of each entry of the matrix
$\operatorname{adj}\left(A\right)$ does not exceed $\left(d-1\right)!\Bigl[\max\limits _{1\leq i,j\leq d}\left|a_{ij}\right|\Bigr]^{d-1}\negthinspace,$
which, together with \eqref{eq:mat_norm_bound}, yields
\[
\left|A^{-1}\right|\leq\tfrac{d!\left[\max\limits _{1\leq i,j\leq d}\left|a_{ij}\right|\right]^{d-1}}{\left|\det\left(A\right)\right|}.
\]
Applying this estimate to $A=\left(D\varphi\right)(x)$ and using
\eqref{eq:inv_deriv}, we obtain \eqref{eq:estim_deriv_inv_func}.

(b) The convergence $\left\Vert \varphi_{n}-\varphi\right\Vert _{C^{k,\alpha}}\to0$
implies that $\left\Vert \varphi_{n}-\varphi\right\Vert _{C^{1}}\to0$,
hence
\[
C\coloneqq\tfrac{d!\sup\limits _{n\in\mathbb{N}}\left\Vert D\varphi_{n}\right\Vert _{\infty}^{d-1}}{\inf\limits _{x\in U,n\in\mathbb{N}}\left|J_{\varphi_{n}}(x)\right|}<\infty.
\]
\eqref{eq:estim_deriv_inv_func}, combined with the mean value theorem
for vector-valued functions, gives
\begin{equation}
\left|\varphi_{n}^{-1}\left(y_{1}\right)-\varphi_{n}^{-1}\left(y_{2}\right)\right|\leq C\left|y_{1}-y_{2}\right|\quad\left(y_{1},y_{2}\in\overline{U};n\in\mathbb{N}\right),\label{eq:phi_n_Lip}
\end{equation}
therefore
\[
\left|x_{1}-x_{2}\right|\leq C\left|\varphi_{n}(x_{1})-\varphi_{n}(x_{2})\right|\quad\left(x_{1},x_{2}\in\overline{U};n\in\mathbb{N}\right).
\]
Letting $n\to\infty$, we obtain the inequality $\left|x_{1}-x_{2}\right|\leq C\left|\varphi(x_{1})-\varphi(x_{2})\right|$\linebreak{}
$\left(x_{1},x_{2}\in\overline{U}\right)$, which implies the injectivity
of $\varphi$.

To prove the surjectivity of $\varphi$, choose any $y\in\overline{U}$.
Since $\varphi_{n}\left(\overline{U}\right)=\overline{U}$, there
exists $x_{n}\in\overline{U}$ such that $\varphi_{n}(x_{n})=y$,
for any $n\in\mathbb{N}$. The compactness of $\overline{U}$ guarantees
the existence of a convergent subsequence $\left\{ x_{n_{i}}\right\} _{i=1}^{\infty}.$
Put $x\coloneqq\lim\limits _{i\to\infty}x_{n_{i}}$. The convergence
in $C^{k,\alpha}$-norm implies the uniform convergence on $\overline{U}$,
hence $\varphi_{n_{i}}\left(x_{n_{i}}\right)-\varphi\left(x_{n_{i}}\right)\to0$.
Thus, 
\[
\varphi(x)=\varphi\left(\lim\limits _{i\to\infty}x_{n_{i}}\right)=\lim\limits _{i\to\infty}\varphi\left(x_{n_{i}}\right)=\lim\limits _{i\to\infty}\left[\varphi\left(x_{n_{i}}\right)-\varphi_{n_{i}}\left(x_{n_{i}}\right)+y\right]=y.
\]
Applying (a) to $\varphi$, we see that $\varphi^{-1}\in C^{k,\alpha}\left(\overline{U}\right)$.

Next, we shall prove that $\varphi_{n}^{-1}\to\varphi^{-1}$ in $C\left(\overline{U}\right)$-norm.
To do that, it is enough to show that $\left\{ \varphi_{n}^{-1}\right\} _{n=1}^{\infty}$
is relatively compact in $C\left(\overline{U}\right)$ and that $\varphi^{-1}$
is the only possible accumulation point of $\left\{ \varphi_{n}^{-1}\right\} _{n=1}^{\infty}$.
Indeed, $\left\{ \varphi_{n}^{-1}\right\} _{n=1}^{\infty}$ is uniformly
bounded since $\varphi_{n}^{-1}\left(\overline{U}\right)\subset\overline{U}\;\,\left(n\in\mathbb{N}\right)$.
Furthermore, the estimate \eqref{eq:phi_n_Lip} shows that $\left\{ \varphi_{n}^{-1}\right\} _{n=1}^{\infty}$
is equicontinuous on $\overline{U}$. Hence the Arzela-Ascoli theorem
implies the relative compactness of $\left\{ \varphi_{n}^{-1}\right\} _{n=1}^{\infty}$.
If $\psi$ is an accumulation point for $\left\{ \varphi_{n}^{-1}\right\} _{n=1}^{\infty}$,
then there exists a subsequence $\left\{ \varphi_{n_{i}}^{-1}\right\} _{i=1}^{\infty}$
such that $\varphi_{n_{i}}^{-1}\to\psi$ uniformly on $\overline{U}$.
In equalities $\varphi_{n_{i}}^{-1}\circ\varphi_{n_{i}}=I$ and $\varphi_{n_{i}}\circ\varphi_{n_{i}}^{-1}=I$
letting $i\to\infty$, we easily conclude that $\psi\circ\varphi=I$
and $\varphi\circ\psi=I$, therefore $\psi=\varphi^{-1}$.

Using \eqref{eq:inv_deriv}, Corollary \ref{cor:Hoeld_cont_frac}
and the established convergence $\varphi_{n}^{-1}\to\varphi^{-1}$
in $C\left(\overline{U}\right)$-norm, we see that $D\varphi_{n}^{-1}\to D\varphi^{-1}$
uniformly on $\overline{U}$, hence $\varphi_{n}^{-1}\to\varphi^{-1}$
in $C^{1}$-norm. The latter, together with \eqref{eq:inv_deriv}
and Corollary \ref{cor:Hoeld_cont_frac}, implies the convergence
$\varphi_{n}^{-1}\to\varphi^{-1}$ in $C^{2}$-norm, etc. In a finite
number of steps we obtain that $\varphi_{n}^{-1}\to\varphi^{-1}$
in $C^{k}$-norm. The latter, in view of continuity of the embedding
\eqref{eq:cont_deriv_Lip}, implies that $\varphi_{n}^{-1}\to\varphi^{-1}$
in $C^{k-1,1}$-norm. Using this convergence, \eqref{eq:inv_deriv}
and the statement (b) of Lemma \ref{lem:Hoelder_composition}, we
finally conclude that $\varphi_{n}^{-1}\to\varphi^{-1}$ in $C^{k,\alpha}$-norm.
\end{proof}

\section{Suprema of Subgaussian Random Processes}  \label{app:B}

In this section, we collect some known results from probability for the convenience of the reader.

To estimate the sampling error in generative adversarial learning,
we used some results on suprema of subgaussian random processes. We
give the necessary definitions.
\begin{defn}
A random variable $X$ is called $\sigma^{2}$-subgaussian if $\mathbb{E}\left(\left|X\right|\right)<\infty$
and
\[
\mathbb{E}\left[e^{\tau\left[X-\mathbb{E}(x)\right]}\right]\leq e^{\frac{\tau^{2}\sigma^{2}}{2}}\quad\left(\tau\geq0\right).
\]
\end{defn}
The following can be seen as a stochastic variant of the Lipschitz property for stochastic processes: 
\begin{defn}\label{def:Subgaussian}
A real-valued random process $\left\{ X_{t}\right\} _{t\in T}$ on
a metric space $(T,\rho)$ is called subgaussian if $\mathbb{E}\left[X_{t}\right]=0\;\,\left(t\in T\right)$
and the increments $X_{t}-X_{s}$ are $\left[\rho\left(t,s\right)\right]^{2}$-subgaussian.
\end{defn}

Provided stochastic processes are subgaussian, we have the following metric entropy estimate for the expected value of the supremum. Here $N(T,\rho,\varepsilon)$ stands for the covering number of the set $T$, i.e.\ the smallest number of $\rho$-balls of radius $\varepsilon$ covering $T$. 

\begin{thm}[Dudley]
If $\left\{ \pm X_{t}\right\} _{t\in T}$ are continuous subgaussian
processes on the compact metric space $(T,\rho)$, then the estimate
(the so-called entropy bound)
\begin{equation}
\mathbb{E}\left[\sup_{t\in T}\left|X_{t}\right|\right]\leq12\int\limits _{0}^{\infty}\sqrt{\log N\left(T,\rho,\varepsilon\right)}d\varepsilon\label{eq:entr_sum_int}
\end{equation}
holds.
\end{thm}

The proof of \eqref{eq:entr_sum_int} can be found in \cite[Sec. 5.3]{Handel16}.

The following concentration inequality due to C.\ McDiarmid (see \cite{McDiar89}) in addition allows us to control the deviation from the expected value.
This enables us in Sec. \ref{sec:quantitative} to pass from estimates
for the expectation values of the maxima of random processes to estimates
for the maxima of those processes themselves.
\begin{thm}[McDiarmid]\label{thm:MCDiarmid}
 Let $X=(X_{1},X_{2},\ldots,X_{n})$ be a family of independent random
variables with $X_{k}$ taking values in a set $A_{k}$ for each $k$.
Suppose that $f:\prod\limits _{k=1}^{n}A_{k}\to\mathbb{R}$ is a function
with the $(c_{1},\ldots,c_{n})$-bounded differences property: for
each $k=1,2,\ldots,n$ and for any vectors $x,y\in\prod\limits _{k=1}^{n}A_{k}$
that differ only in the $k$-th coordinate, the inequality $f(x)-f\left(y\right)\leq c_{k}$
holds. Then
\begin{equation}
\mathbb{P}\left(f(x)-\mathbb{E}\left[f(x)\right]\geq t\right)\leq e^{-\frac{2t^{2}}{\sum\limits _{i=1}^{n}c_{i}^{2}}}\quad\left(t\geq0\right).\label{eq:McDiar_ineq}
\end{equation}
\end{thm}

\subsubsection*{Acknowledgments}
The authors thank the anonymous referee and Miriam Ackermann for her help in inproving the typoscript. H. A. acknowledges financial support from
Bergisch Smart Mobility funded by the Ministry of Economic Affairs,
Innovation, Digitalization and Energy of the State of North Rhine-Westphalia.
H. G. would like to thank Philipp Petersen for interesting discussions. We also thank the anonymous referee for many useful hints that helped to improve the paper.


\DeclareDelimFormat{finalnamedelim}{\addspace\bibstring{and}\space}

\printbibliography
 \addcontentsline{toc}{section}{References}
\end{document}